\theoremstyle{plain}
\newtheorem{theorem}{Theorem}[section]
\theoremstyle{definition}
\theoremstyle{remark}
\icmltitlerunning{GCD via reciprocal learning and class-wise distribution regularization}
\begin{document}

\twocolumn[
\icmltitle{Generalized Category Discovery  via Reciprocal Learning and \\Class-Wise Distribution Regularization}



\icmlsetsymbol{equal}{*}

\begin{icmlauthorlist}
\icmlauthor{Duo Liu}{sjtu}
\icmlauthor{Zhiquan Tan}{sch}
\icmlauthor{Linglan Zhao \textsuperscript{\#}}{sjtu}
\icmlauthor{Zhongqiang Zhang}{sjtu}
\icmlauthor{Xiangzhong Fang}{sjtu}
\icmlauthor{Weiran Huang \textsuperscript{\dag}}{sjtu,sii,skl}

\end{icmlauthorlist}

\icmlaffiliation{sjtu}{School of Electronic Information and Electrical Engineering, Shanghai Jiao Tong University}
\icmlaffiliation{sch}{Department of Mathematical Sciences, Tsinghua University}
\icmlaffiliation{sii}{Shanghai Innovation Institute}
\icmlaffiliation{skl}{State Key Laboratory of General Artificial Intelligence, BIGAI}
\icmlcorrespondingauthor{Weiran Huang}{weiran.huang@outlook.com}

\icmlkeywords{Machine Learning, ICML}

\vskip 0.3in
]



\printAffiliationsAndNotice{}  

\begin{abstract}
Generalized Category Discovery (GCD) aims to identify unlabeled samples by leveraging the base knowledge from labeled ones, where the unlabeled set consists of both base and novel classes. 
Since clustering methods are time-consuming at inference, parametric-based approaches have become more popular. 
However, recent parametric-based methods suffer from inferior base discrimination due to unreliable self-supervision. 
To address this issue, we propose a Reciprocal Learning Framework (RLF) that introduces an auxiliary branch devoted to base classification. 
During training, the main branch filters the pseudo-base samples to the auxiliary branch. 
In response, the auxiliary branch provides more reliable soft labels for the main branch, leading to a virtuous cycle. 
Furthermore, we introduce Class-wise Distribution Regularization (CDR) to mitigate the learning bias towards base classes. 
CDR  essentially increases the prediction confidence of the unlabeled data and boosts the novel class performance. 
Combined with both components, our proposed method, RLCD, achieves superior performance in all classes with negligible extra computation. 
Comprehensive experiments across seven GCD datasets validate its superiority.
Our codes are available at \href{https://github.com/APORduo/RLCD}{https://github.com/APORduo/RLCD}.
\end{abstract}

\section{Introduction}
\label{Introduction}
With the development of deep learning in recent years, models can perform well in traditional tasks such as image recognition~\cite{he2016deep,he2017mask,vaswani2017attention,dosovitskiy2020image}. Generally, the models rely on abundant annotated data in a closed scenario where the unlabeled data share the same classes with the labeled training data. However, these models have limitations in the real-world scenario where unlabeled data comes from unknown classes. In this way, Category Discovery (CD) has garnered attention in the machine learning community. Initially, \citet{han2019learning} propose Novel Class Discovery (NCD), which is designed to cluster novel class data with the assistance of labeled data exclusively. However, NCD assumes that the unlabeled data all belong to novel classes, which is unrealistic in practical scenarios. 
Recently, Generalized Category Discovery (GCD)~\cite{vaze2022generalized} has emerged and it allows the unlabeled data spanning both base and novel categories. Compared to the NCD task, GCD is more practical and challenging in real-world scenarios.

\begin{figure}
    \centering
    \includegraphics[width=1.0\linewidth]{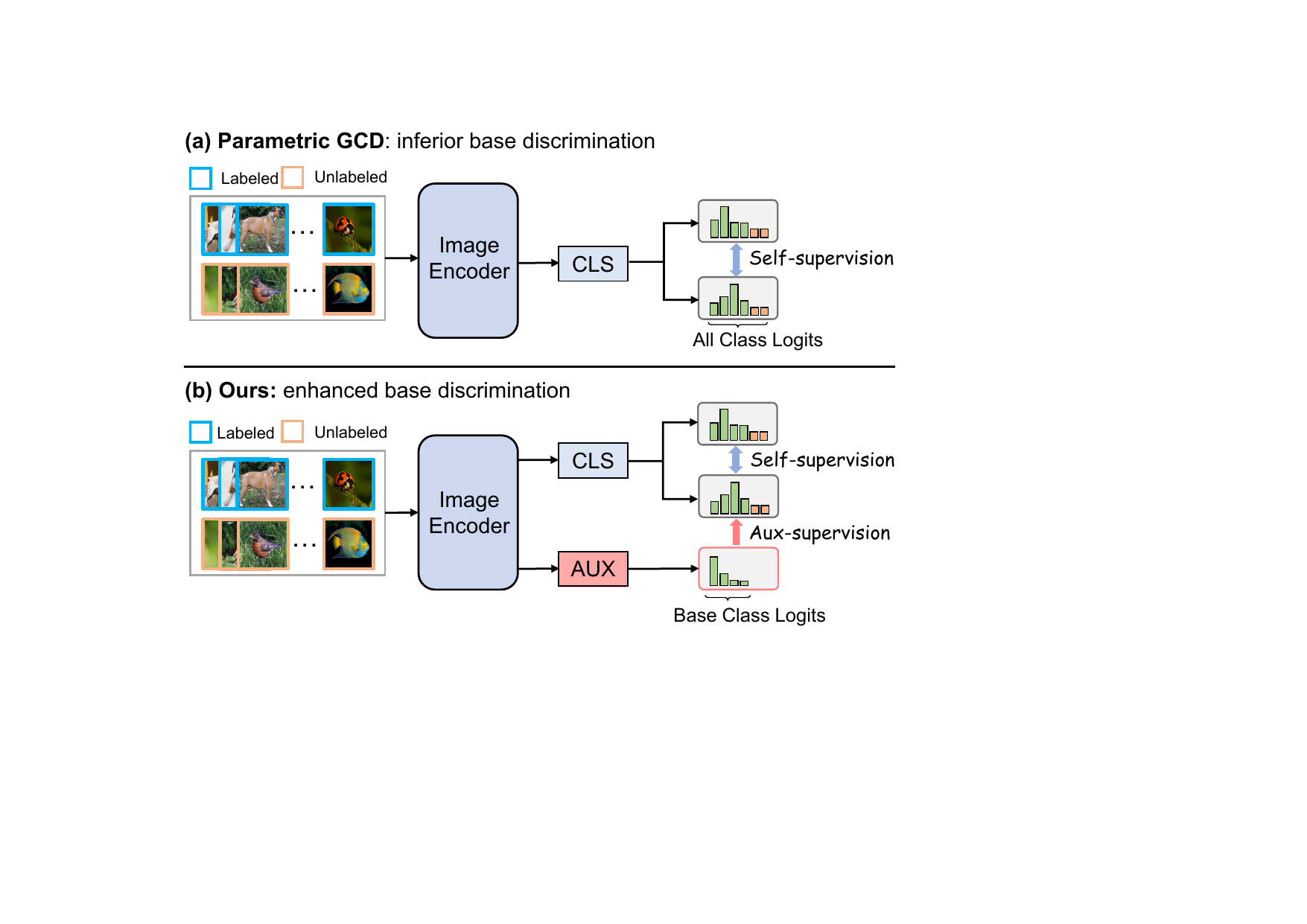}
    \vspace{-10pt}
    \caption{(a) Parametric Generalized Category Discovery (GCD) methods rely on self-supervision for clustering unlabeled data but exhibit weak base class discrimination. (b) Our approach introduces an auxiliary branch specialized in base class classification, providing more reliable base logits to the main branch and thereby significantly improving base discrimination.}
    \label{fig:illustration}
    \vspace{-15pt}
\end{figure}
\citet{vaze2022generalized} first define the GCD problem and tackle it using contrastive learning along with the semi-supervised $k$-means clustering method. \citet{wen2023parametric}further propose an effective parametric framework, SimGCD, which outperforms the clustering methods with reduced inference time. Due to its effectiveness, the parametric framework has become popular in GCD research. \citet{wang2024sptnet} design a two-stage framework on the pre-trained SimGCD model that introduces both global and spatial prompts to fine-tune the model. LegoGCD\cite{cao2024solving} observes that SimGCD suffers catastrophic forgetting of base classes, and they propose a novel regularization to address it. Despite the significant advancements in parametric methods, Fig.~\ref{fig:illustration} reveals a key limitation: these methods rely exclusively on self-supervision, leading to suboptimal base discrimination.

\begin{figure}[t]
    \centering
    \includegraphics[width=1\linewidth]{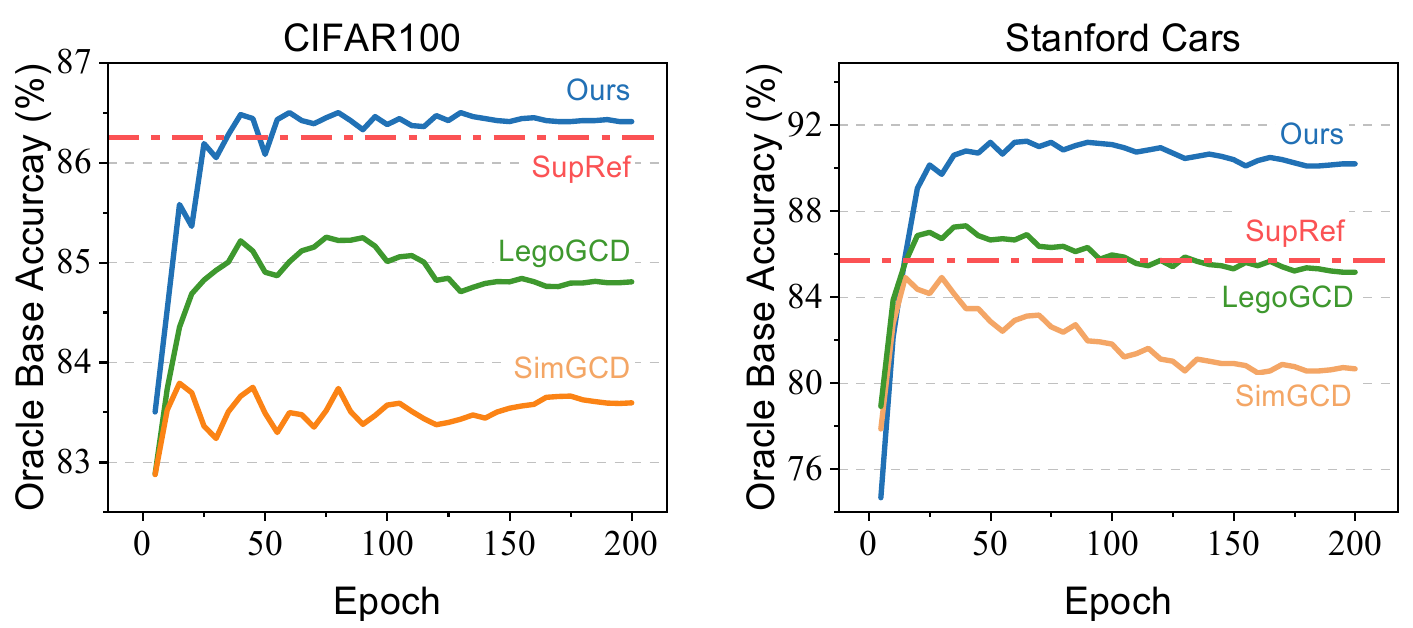}
    \vskip -0.15in
    \caption{Comparison of the oracle base class accuracy between SimGCD, LegoGCD, and our
method. SimGCD and LegoGCD exhibit poor performance, falling behind the supervised reference
(SupRef). Contrarily, our method exhibits enhanced discrimination, even surpassing SupRef.}
    \label{fig:ob_comp}
    
\end{figure}

To quantitatively reveal the main limitation in existing works, we define the \emph{oracle base accuracy} (OB) for evaluating base discrimination. OB solely considers base-class prediction and calculates the accuracy of unlabeled base data utilizing the prototype classifier.
As shown in Fig.~\ref{fig:ob_comp}, both SimGCD and LegoGCD lag behind the supervised-only reference (SupRef), which exclusively utilizes labeled data for training. This disparity primarily arises from the unreliable soft labels in self-supervised learning.

To promote base discrimination, we design a Reciprocal Learning Framework (RLF). In particular, we insert an auxiliary token named \texttt{AUX} in the model architecture. Then \texttt{AUX} is concatenated with the \texttt{CLS} token and image feature tokens to form the input of the final block. Subsequently, the corresponding \texttt{AUX} output is dedicated to a base-only classifier while the \texttt{CLS} output is designated for the all-class classifier. 
During training, the main branch filters pseudo-base samples,  which are predicted to the base classes, and directs them to the auxiliary branch.
In feedback, the auxiliary branch provides reliable base class distribution to the main branch.  
This collaboration between the two branches contributes to more robust base predictions, improving base-class discrimination and overall accuracy. Benefiting from parallel computation between tokens, the extra computation cost is negligible. 

However, the reciprocal framework may incur learning bias toward base classes that more novel samples are misclassified into the base classes.  
To alleviate the above bias, we propose a Class-wise Distribution Regularization (CDR) technique. 
Specifically, CDR involves calculating the expected distribution for each category based on mini-batch predictions.
Then, CDR loss promotes expectation consistency between two views of the mini-batch and boosts prediction confidence. 
Since each class can be treated equally, CDR effectively mitigates the bias and boosts novel class performance. 
By integrating CDR into the RLF, our method, named RLCD,  obtains substantial improvements. 

Our key contributions can be summarized as follows: 
\begin{itemize}[itemsep=2pt,topsep=0pt,parsep=0pt]
    \item  We define the oracle base class accuracy (OB) as a metric to assess the base class discrimination of GCD models, revealing the inferior discrimination of parametric GCD methods.
    \item  We design a novel reciprocal learning framework to promote base class discrimination and a class-wise distribution regularization loss to improve novel class performance.
    \item  Experimental results on seven GCD datasets show that the proposed method consistently outperforms state-of-the-art approaches in most scenarios.
    
\end{itemize}

\begin{figure*}[ht!]
    \centering
    \includegraphics[width=0.9\linewidth]{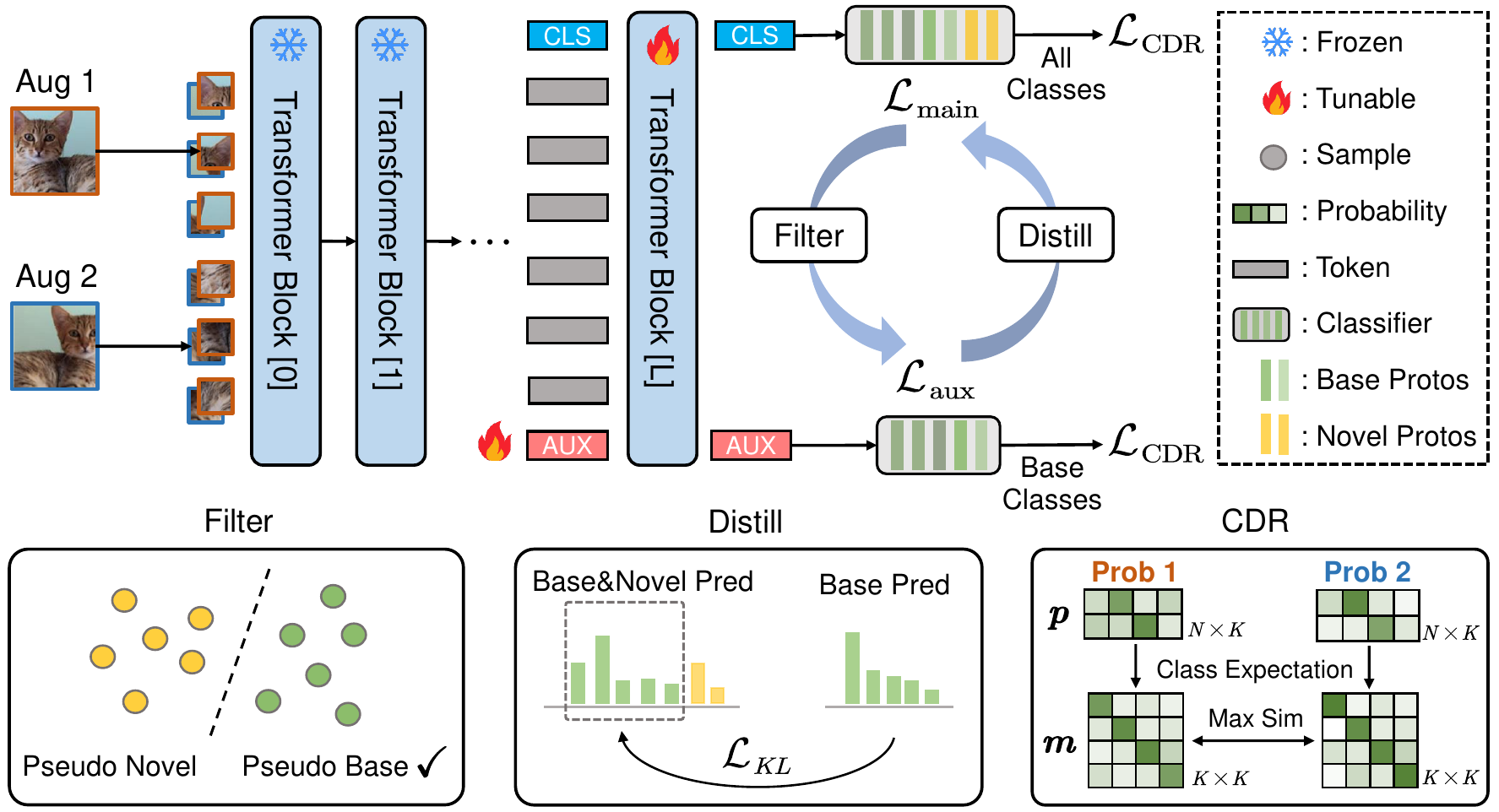}
    \caption{Overview of our method. We insert an auxiliary token \texttt{AUX} before the last block of the ViT backbone. The final \texttt{AUX} feature is utilized for the base-only classifier while the \texttt{CLS} feature is assigned to the all-class classifier. The main branch filters the pseudo-base samples to the aux branch for better base class learning. In response, the auxiliary branch provides the main branch with the refined base class distribution. Class-wise Distribution Regularization (CDR) boosts novel performance by maximizing the similarity between class-wise probability matrices $\boldsymbol{m}$ from two views. }
    \label{fig:frame}
    \vspace{-10pt}
\end{figure*}
\section{Related Works}

\textbf{Semi-Supervised Learning (SSL)} is a prominent area in machine learning that addresses the challenge of training models with limited labeled data. Pseudo Label~\cite{lee2013pseudo} iteratively assigns pseudo labels for unlabeled data, which join the labeled set for further training. Mean-teacher~\cite{tarvainen2017mean}, UDA~\cite{xie2020unsupervised}, Fixmatch~\cite{sohn2020fixmatch} adopt confidence threshold to generate pseudo labels on weak augmented samples and utilize it to supervise strongly augmented samples, and DST~\cite{chen2022debiased} proposes an adversary framework to refine pseudo labels. ConMatch~\cite{kim2022conmatch} adds self-supervised features regularization, while SimMatch~\cite{zheng2022simmatch} extends consistency to the semantic and instance levels.  
Prevailing semi-supervised methods widely adopt threshold-based pseudo-label learning during training. However, this mechanism faces significant limitations when unlabeled data includes samples from unknown classes.

\textbf{Novel Class Discovery (NCD)} aims to recognize novel classes in unlabeled data by exploiting knowledge from known classes. \cite{han2019learning} first proposes the NCD problem and addresses it utilizing a two-stage training strategy. \cite{han2020automatically} employ rank statistics to find positive data pairs and pull them closer. OpenMix~\cite{zhu2023openmix} generates virtual samples by MixUp between labeled and unlabeled data, guiding the model to resist noisy labeled data. \cite{zhong2021neighborhood} proposes neighborhood contrastive learning to aggregate pseudo-positive pairs. \cite{fini2021unified} introduces a unified objective framework with the Sinkhorn-Knopp algorithm, allowing cross-entropy to operate on both labeled and unlabeled sets. CRNCD~\cite{gu2023class} conducts a class-relationship distillation approach to improve novel-class performance. However, this distillation shows inferior performance on GCD. Unlike CRNCD, we propose a novel one-stage distillation method tailored for GCD.

\textbf{Generalized Category Discovery (GCD)} is to cluster unlabeled images by leveraging the base knowledge from labeled images, where the unlabeled set comprises both base and novel classes. \cite{vaze2022generalized} formulates the GCD problem and conducts contrastive training on a pre-trained ViT model~\cite{dosovitskiy2020image} with DINO~\cite{caron2021emerging}, followed by semi-supervised $k$-means clustering. Recent works have extended GCD to settings such as active learning~\cite{ma2024active} and continual learning~\cite{ma2024happy}. CiPR~\cite{hao2024cipr} designs a novel contrastive learning method by exploiting cross-instance positive relations in labeled data and introducing a hierarchical clustering algorithm. GPC~\cite{zhao2023learning} applies Gaussian mixture models that learn robust representation and estimate the novel class number. \citet{wen2023parametric} propose a parametric framework that trains a prototype classifier to fit all categories. SimGCD utilizes mean-entropy regularization to automatically find novel classes. As SimGCD boots GCD performance with lower inference latency, the parametric framework becomes popular. SPTNet~\cite{wang2024sptnet} introduces a two-stage strategy that combines the global and spatial prompts to further finetune the SimGCD model. ~\citet{lin2024flipped} design a teacher-student attention alignment strategy to promote GCD performance. LegoGCD~\cite{cao2024solving} finds that SimGCD suffers from catastrophic forgetting in training and solves it by adding regularization to potential known class samples. While parametric-based methods achieve great GCD performance, they often experience degraded base discrimination. To address this issue, we propose a Reciprocal Learning Framework (RLF) that provides more reliable base pseudo-labels and effectively strengthens base performance.

\section{Method}
\subsection{Preliminaries}
\textbf{Problem formulation.} Generalized Category Discovery (GCD) aims to adaptively cluster unlabeled data utilizing the knowledge from labeled data. GCD is built upon the open-world dataset, which compromises two subsets: labeled dataset $\mathcal{D}^{l} = \{(\boldsymbol{x}_i,y_i)\} \in  \mathcal{X} \times \mathcal{Y}^l$ and unlabeled dataset $\mathcal{D}^{u} = \{(\boldsymbol{x}_i,y_i)\} \in  \mathcal{X} \times \mathcal{Y}^u$.  Formally, $\mathcal{Y}^{l}$ is  a subset of $\mathcal{Y}^u$, and  $\mathcal{Y}^{u}$ spans all categories. Following previous research, the number of $|\mathcal{Y}^{u}|$ is assumed as the prior. GCD adopts a transductive training strategy in which all the samples are involved in the training process.


\textbf{Parametric clustering.}
\cite{wen2023parametric} proposes an efficient parametric framework that builds a prototype classifier for clustering. Specifically, the classifier weight is the set of prototypes $\mathcal{C}=\left\{\boldsymbol{c}_1, \ldots, \boldsymbol{c}_K\right\}$, where $K$ is the total number of prototypes. Given an image $\boldsymbol{x}_i$, the model correspondingly  output feature $f(\boldsymbol{x}_i)$, and the probability of category $k$ is denoted as:
\begin{equation}
\boldsymbol{p}_i^{(k)}=\frac{\exp \left( \cos\left(f\left(\boldsymbol{x}_i\right), \boldsymbol{c}_k\right) / \tau_s \right)}{\sum_{k^{\prime}} \exp \left(\cos\left( f\left(\boldsymbol{x}_i\right) , \boldsymbol{c}_{k^{\prime}}\right) /{\tau_s}\right)},
\end{equation}
where $\cos$ denotes the cosine similarity between two vectors and $\tau_s$ is the temperature scalar.  Similarly, the shrink probability $\boldsymbol{q}_i$ can be derived by substituting $\tau_s$ with a smaller $\tau_t$. \
Subsequently, SimGCD adopts the cross entropy loss $\mathcal{L}_{\text{ce}}(\boldsymbol{q},\boldsymbol{p}) = -\sum_k \boldsymbol{q}^{(k)} \log \boldsymbol{p}^{(k)}$  to regularize the probability self-consistency between two views of an image. For $i$-th image, the loss is formulated as:
\begin{equation}
\mathcal{L}_{\mathrm{self}}^{(i)}= \frac{1}{2}\mathcal{L}_{\text{ce}} \left(\boldsymbol{q}_i^{\prime}, \boldsymbol{p}_i\right) +\frac{1}{2}\mathcal{L}_{\text{ce}} \left(\boldsymbol{q}_i, \boldsymbol{p}_i^{\prime}\right),
\label{L_self}
\end{equation}
where $\boldsymbol{p}_i^{\prime}$ and $\boldsymbol{q}_i^{\prime}$ are the prototype probabilities of another view. Additionally, SimGCD employs a mean-entropy maximization regulariser for clustering: $H(\overline{\boldsymbol{p}})=-\sum_k \overline{\boldsymbol{p}}^{(k)} \log \overline{\boldsymbol{p}}^{(k)}$ where  $ \overline{\boldsymbol{p}}$ is the mean predicted probability of all the samples. The supervised loss for the labeled data is the sum of cross-entropy and supervised contrastive learning losses~\cite{khosla2020supervised}:
\begin{equation}
\begin{aligned}
     \mathcal{L}_{\text{con}}^{(i)} &= \frac{1}{\left|\mathcal{P}_i\right|} \sum_{q \in \mathcal{P}_i}-\log \frac{\exp \left(\text{cos}(f\left(\boldsymbol{x}_i), {f(\boldsymbol{x}^{\prime}_q)} \right)/ \tau_c\right)}{\sum_{n \neq i} \exp \left(\left(f\left(\boldsymbol{x}_i\right), {f\left(\boldsymbol{x}^{\prime}_n\right)} \right) / \tau_c\right)},\\
     \mathcal{L}_{\text{sup}}^{(i)} &= \mathcal{L}_{\text{ce}}\left(\boldsymbol{y}_i,\boldsymbol{p}_i\right) + \mathcal{L}_{\text{con}}^{(i)}
     ,
\end{aligned}
\end{equation}
 where $\boldsymbol{y}_i$ is the one-hot distribution associated with $y_i$, $\tau_c$ denotes the temperature scalar for supervised contrastive learning and the $\mathcal{P}_i$ is the positive index set sharing the same label as $\boldsymbol{x}_i$. While SimGCD applies InfoNCE~\cite{oord2018representation} loss in the training, we found the loss tends to push apart same-class features, which conflicts with the SupCon loss and impairs feature discrimination. Consequently, we chose to remove InfoNCE in our approach for better discrimination.
 Overall, the parametric clustering loss $\mathcal{L}_\text{cls}$ is the average per-sample combination of supervised loss, self-consistency loss, and entropy regularization loss: 
\begin{equation}
    \mathcal{L}_\text{cls} = \lambda \mathcal{L}_{\text{sup}} + (1-\lambda) (\mathcal{L}_{\mathrm{self}} - \epsilon H(\overline{\boldsymbol{p}})),
\end{equation}
where $\lambda$ is the balance weight belonging to [0,1] and $\epsilon$ is the scalar to control entropy regularization.

\subsection{Reciprocal Learning Framework}
\textbf{Motivation.} While SimGCD demonstrates greater effectiveness than clustering methods, it falls short in base class discrimination. Specifically, when focusing on base classification, the unlabeled base data is defined as $\mathcal{D}_{\text{base}}^u = \{(\boldsymbol{x}_i, y_i) |(\boldsymbol{x}_i, y_i) \in \mathcal{D}^u, y_i \in \mathcal{Y}^l \}$. The oracle base accuracy is defined as  ${ACC}_{\text{OB}}=\frac{1}{|\mathcal{D}_{\text{base}}^u|} \sum_{\boldsymbol{x}_i,y_i \in \mathcal{D}^u_{\text{base}}} \mathds{1}\left(\widetilde{y}_i =y_i\right)$, where $\widetilde{y}_i$ is the predicted base class result. As depicted in Fig.~\ref{fig:ob_comp},  prevailing parametric methods exhibit unsatisfactory oracle base accuracy, falling behind the supervised-only reference. \label{motivation}

To this end, we propose a one-stage Reciprocal Learning Framework~(RLF). As shown in Fig.~\ref{fig:frame}, we insert the auxiliary token \texttt{AUX} before the last block, concatenating it with \texttt{CLS} and feature tokens to form the input. The final \texttt{AUX} feature is utilized for the base-only classification, while the \texttt{CLS} feature is assigned to the all-class classifier. 
 Different from the \texttt{CLS} feature, which is unique to each image,  the \texttt{AUX} token is a trainable parameter shared across all training samples.   

During the training procedure, the main branch is akin to generic parametric clustering.  Besides, the main branch filters the pseudo-base class samples to the auxiliary branch according to the prediction result, \emph{i.e.}, if a sample is predicted to belong to the base classes, it will also be involved in the auxiliary branch. In response, the auxiliary branch distills the base class prediction of pseudo-base samples to the main branch. The collaboration between the two branches effectively enhances base discrimination, mitigates the influence of noise labels, and facilitates the model in acquiring improved representations.

Note that most of the training samples will be predicted as the base classes initially, and the auxiliary branch also incorporates abundant novel samples at the same time. To this end, the auxiliary branch adopts self-supervised learning and supervised learning, rather than threshold-based semi-supervised methods. Furthermore, we utilize the maximum probability as the uncertainty weight for each pseudo-base sample in the cross-branch distillation.
The distillation loss for a pseudo-base sample $i$ is denoted as:
\begin{equation}
    \mathcal{L}_{\text{dis}}^{(i)} =  \max (\boldsymbol{p}_{b,i}^{\text{aux}}) \cdot \mathcal{L}_{\text{KL}} (\boldsymbol{p}_{b,i}^{\text{aux}} , \boldsymbol{p}_{b,i}),
    \label{L_dis}
\end{equation}
where $\boldsymbol{p}_{b}^{\text{aux}}$, $\boldsymbol{p}_b$ is the base class distribution from the auxiliary and main branch, $\mathcal{L}_{\text{KL}}$ is the standard KL-divergence loss, and the auxiliary probability is detached in the distillation.
Consequently, the loss functions of the two branches can be presented as:
\begin{equation}
\begin{aligned}
     \mathcal{L}_{\text{main}} = \mathcal{L}_{\text{cls}} + \alpha \mathcal{L}_{\text{dis}}, \ 
    \mathcal{L}_{\text{aux}} = \mathcal{L}_{\text{sup}} + \mathcal{L}_{\text{self}},
    \label{eq: alpha}
\end{aligned}
\end{equation} 
where $\alpha$  controls the distillation strength.

\subsection{Class-Wise Distribution Regularization.} 
While the proposed reciprocal framework can effectively improve base class discrimination, it still shows inferior performance in the novel classes. This is primarily due to the cross-branch distillation being confined to base class distributions, resulting in a learning bias where training samples are more likely to be recognized as base classes. Fig.~\ref{fig:cdr}~(a) shows that the predicted novel samples lag behind the ground truth number.  To mitigate the learning bias, we propose a novel Class-wise Distribution Regularization (CDR) shown in the bottom right of Fig.~\ref{fig:frame}.

Adopted by \cite{zhang2024rethinking}, the class-wise expected distribution $\boldsymbol{m}$ for category $k$ is defined as:
\begin{equation}
\boldsymbol{m}_k=\frac{1}{\sum_{i=1}^{N} \boldsymbol{p}_{i}^{(k)}}\left(\sum_{i=1}^{N} \boldsymbol{p}_{i}^{(k)}\boldsymbol{p}_i\right),
\label{eq:mk}
\end{equation}
where $N$ is the batch size, and  $\boldsymbol{m}$ has a dimension of $K \times K$.
For the main branch, $K^{\text{main}}$is the number of all classes, while $K^{\text{aux}}$ denotes the number of base classes in the auxiliary branch. Intuitively, the $k$-th probability of $m_k$, denoted as $m(k, k)$, reflects the confidence that ``the mini-batch contains at least one sample belonging to category $k$."
\begin{theorem}
 The sum of all elements in $\boldsymbol{m}_k$ equals 1, i.e., $\mathbf{1}^T\boldsymbol{m}_k = 1$~\cite{zhang2024rethinking}.
 \label{theo:1}
\end{theorem}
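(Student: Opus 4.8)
The plan is to prove the identity by direct computation, relying on the single structural fact that each prediction vector $\boldsymbol{p}_i$ lies on the probability simplex. Recall from the parametric-clustering definition in Eq.~(1) that $\boldsymbol{p}_i$ is a softmax output over all prototypes, so its entries are nonnegative and satisfy $\sum_{k'} \boldsymbol{p}_i^{(k')} = 1$, which in vector form reads $\mathbf{1}^T \boldsymbol{p}_i = 1$ for every $i$. This is the only property of the $\boldsymbol{p}_i$ that I expect to need; the denominator $\sum_{i=1}^N \boldsymbol{p}_i^{(k)}$ in Eq.~\eqref{eq:mk} is a strictly positive scalar (as a sum of nonnegative softmax entries, assuming at least one is nonzero), so the normalization is well defined.

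The first step is to left-multiply the definition \eqref{eq:mk} by $\mathbf{1}^T$ and pull the scalar normalization factor outside, giving
\begin{equation}
\mathbf{1}^T \boldsymbol{m}_k = \frac{1}{\sum_{i=1}^{N} \boldsymbol{p}_{i}^{(k)}}\, \mathbf{1}^T \!\left(\sum_{i=1}^{N} \boldsymbol{p}_{i}^{(k)} \boldsymbol{p}_i\right).
\end{equation}
The second step is to use linearity of $\mathbf{1}^T(\cdot)$ over the finite sum, noting that each $\boldsymbol{p}_i^{(k)}$ is itself a scalar weight, so it commutes past $\mathbf{1}^T$:
\begin{equation}
\mathbf{1}^T \boldsymbol{m}_k = \frac{1}{\sum_{i=1}^{N} \boldsymbol{p}_{i}^{(k)}} \sum_{i=1}^{N} \boldsymbol{p}_{i}^{(k)} \left(\mathbf{1}^T \boldsymbol{p}_i\right).
\end{equation}

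The final step is to substitute the simplex property $\mathbf{1}^T \boldsymbol{p}_i = 1$, which collapses the numerator to $\sum_{i=1}^N \boldsymbol{p}_i^{(k)}$ and cancels against the denominator, yielding $\mathbf{1}^T \boldsymbol{m}_k = 1$. In effect, the argument shows that $\boldsymbol{m}_k$ is a convex combination of the simplex vectors $\boldsymbol{p}_1,\dots,\boldsymbol{p}_N$ with weights proportional to $\boldsymbol{p}_i^{(k)}$, and any convex combination of points on the probability simplex remains on the simplex. I do not anticipate any genuine obstacle: the statement is essentially an unwinding of definitions, and the only point worth stating explicitly is the well-definedness of the normalization (nonvanishing denominator) together with the softmax-normalization of each $\boldsymbol{p}_i$ inherited from Eq.~(1). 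The same reasoning applies verbatim to both the main branch ($K^{\text{main}}$ classes) and the auxiliary branch ($K^{\text{aux}}$ base classes), since in each case the relevant predictions are softmax distributions summing to one.
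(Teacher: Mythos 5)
Your proof is correct and follows essentially the same route as the paper's: left-multiply Eq.~\eqref{eq:mk} by $\mathbf{1}^T$, use linearity to reduce the numerator to $\sum_{i=1}^{N}\boldsymbol{p}_i^{(k)}(\mathbf{1}^T\boldsymbol{p}_i)$, and substitute $\mathbf{1}^T\boldsymbol{p}_i=1$ so the normalization cancels. Your added remarks on the nonvanishing denominator and the convex-combination interpretation are fine but not needed beyond what the paper does.
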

\vspace{-5pt}
\vspace{-5pt}
\begin{figure}[t]
    \centering
    \includegraphics[width=1.0\linewidth]{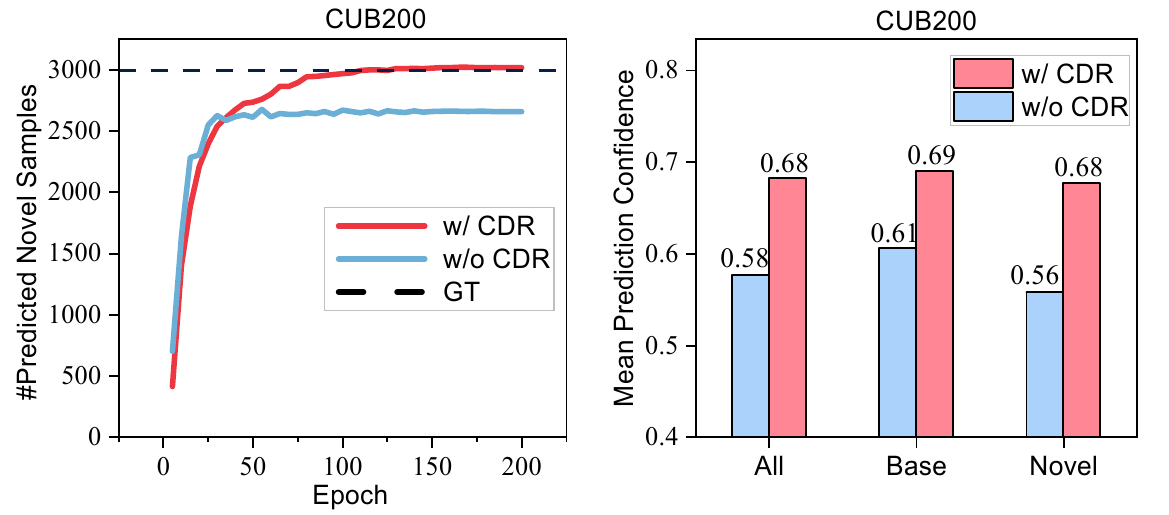}
 
    \caption{The efficacy of CDR is evident in two aspects. Left: CDR induces more predicted novel class samples. Right: CDR contributes to higher prediction confidence. }
    \label{fig:cdr}
    \vspace{-5pt}
\end{figure}
\Cref{theo:1} shows that $\boldsymbol{m}_k$ conforms to the standard probability distribution. Intuitively, the class-wise prediction should be consistent between the two views of the images and close to the one-hot distribution. To this end, for class $k$, the CDR loss is formulated as
\begin{equation}
\mathcal{L}_{\mathrm{CDR}}^{(k)}=  1 - \langle\boldsymbol{m}_k, \boldsymbol{m}_k^{\prime}\rangle,
\end{equation}
where $\langle \cdot, \cdot \rangle$ denotes the inner product calculation, representing the similarity between two distributions, and $\boldsymbol{m}_k^{\prime}$ is the expectation from another view. Since each class is treated equally, CDR effectively alleviates the bias towards the base classes in the main branch.
\begin{theorem}
$\mathcal{L}_{\mathrm{CDR}}^{(k)}$ equals to zero $\iff$ $\boldsymbol{m}_k$ equals $\boldsymbol{m}_k^{\prime}$ and is a one-hot distribution.
\label{theo:2}
\end{theorem}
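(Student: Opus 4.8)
The plan is to reduce the biconditional to a sharp inner-product bound on the probability simplex and then carry out the equality-case analysis. First I would use \Cref{theo:1} to record that both $\boldsymbol{m}_k$ and $\boldsymbol{m}_k^{\prime}$ are genuine probability vectors. Indeed, each is a convex combination $\sum_i w_i \boldsymbol{p}_i$ of the per-sample distributions, with weights $w_i = \boldsymbol{p}_i^{(k)}/\sum_j \boldsymbol{p}_j^{(k)}$ that are nonnegative and sum to one; hence every coordinate lies in $[0,1]$, and \Cref{theo:1} gives $\mathbf{1}^{\top}\boldsymbol{m}_k = \mathbf{1}^{\top}\boldsymbol{m}_k^{\prime} = 1$. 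Since $\mathcal{L}_{\mathrm{CDR}}^{(k)} = 1 - \langle \boldsymbol{m}_k, \boldsymbol{m}_k^{\prime}\rangle$, the vanishing of the loss is equivalent to $\langle \boldsymbol{m}_k, \boldsymbol{m}_k^{\prime}\rangle = 1$, so it suffices to show that for two vectors $\boldsymbol{a},\boldsymbol{b}$ in the simplex, $\langle \boldsymbol{a},\boldsymbol{b}\rangle = 1$ holds if and only if $\boldsymbol{a} = \boldsymbol{b}$ is a common one-hot vector.

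Next I would establish the underlying estimate. Because every coordinate satisfies $\boldsymbol{b}^{(j)} \le 1$, we have the termwise inequality $\boldsymbol{a}^{(j)}\boldsymbol{b}^{(j)} \le \boldsymbol{a}^{(j)}$, and summing gives $\langle \boldsymbol{a},\boldsymbol{b}\rangle = \sum_j \boldsymbol{a}^{(j)}\boldsymbol{b}^{(j)} \le \sum_j \boldsymbol{a}^{(j)} = 1$. This proves that the inner product is at most $1$ on the simplex and, more usefully, isolates the equality condition as $\boldsymbol{a}^{(j)}\bigl(1 - \boldsymbol{b}^{(j)}\bigr) = 0$ for every index $j$.

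The \emph{crux} is then the equality-case analysis. Equality means that for each $j$ either $\boldsymbol{a}^{(j)} = 0$ or $\boldsymbol{b}^{(j)} = 1$. I would first argue that some coordinate of $\boldsymbol{b}$ must equal $1$: otherwise the condition forces $\boldsymbol{a}^{(j)} = 0$ for all $j$, contradicting $\sum_j \boldsymbol{a}^{(j)} = 1$. Fixing an index $j_0$ with $\boldsymbol{b}^{(j_0)} = 1$, the simplex constraint on $\boldsymbol{b}$ (nonnegative entries summing to one) forces $\boldsymbol{b} = \boldsymbol{e}_{j_0}$ to be one-hot; then for every $j \ne j_0$ we have $\boldsymbol{b}^{(j)} = 0$, so the equality condition forces $\boldsymbol{a}^{(j)} = 0$, whence $\boldsymbol{a} = \boldsymbol{e}_{j_0} = \boldsymbol{b}$ as well. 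The converse direction is immediate: if $\boldsymbol{m}_k = \boldsymbol{m}_k^{\prime} = \boldsymbol{e}_{j_0}$, then $\langle \boldsymbol{m}_k, \boldsymbol{m}_k^{\prime}\rangle = 1$ and the loss is zero.

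I expect the main obstacle to be precisely this last step: ensuring that the pointwise condition $\boldsymbol{a}^{(j)}\bigl(1 - \boldsymbol{b}^{(j)}\bigr) = 0$, taken together with the simplex constraints on \emph{both} vectors, pins down a single shared coordinate equal to one rather than merely forcing the two vectors to coincide. Everything else—the nonnegativity and normalization of $\boldsymbol{m}_k$, the termwise bound, and the trivial converse—is routine bookkeeping.
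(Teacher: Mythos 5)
Your proof is correct, but it takes a genuinely different route from the paper's. The paper bounds $\langle \boldsymbol{a},\boldsymbol{b}\rangle$ via the Cauchy--Schwarz inequality combined with the power-sum estimate $\sum_i a_i^2 \le \sum_i a_i = 1$; its equality analysis therefore has two separate strands, namely proportionality $a_i = c\,b_i$ from tightness in Cauchy--Schwarz (with $c=1$ forced by normalization) and $a_i \in \{0,1\}$ from tightness in $\sum_i a_i^2 \le \sum_i a_i$, which together yield $\boldsymbol{a}=\boldsymbol{b}$ one-hot. You instead use the elementary termwise bound $a_j b_j \le a_j$ (valid since $0 \le b_j \le 1$), which delivers both the inequality $\langle\boldsymbol{a},\boldsymbol{b}\rangle\le 1$ and the equality condition $a_j(1-b_j)=0$ for every $j$ in a single step; the case analysis (some $b_{j_0}=1$, else $\boldsymbol{a}=\mathbf{0}$ contradicts normalization) then pins down the common one-hot vector cleanly. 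Your argument avoids Cauchy--Schwarz entirely and makes the equality case more transparent, at no loss of rigor; the paper's version leans on standard machinery but must track tightness through a chain of two inequalities. Both proofs also agree on the preliminary step of invoking \Cref{theo:1} (plus nonnegativity of the convex combination defining $\boldsymbol{m}_k$) to place both vectors on the probability simplex, and on the trivial converse direction.
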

\begin{proof}
\renewcommand{\qedsymbol}{}
\vspace{-5pt}
    Please refer to the \Cref{proof2}.
\end{proof}
\Cref{theo:2} indicates that CDR  essentially increases the prediction confidence, approaching the one-hot distribution.  The effectiveness of CDR is evidenced in Fig.~\ref{fig:cdr}, as it leads to a higher number of predicted novel class samples, reducing learning bias and boosting prediction confidence.  Furthermore, the CDR loss operates independently of ground-truth labels and is compatible with both branches. When applied to the auxiliary branch, CDR also benefits base class learning with minimal impact on the novel class performance of the main branch. By integrating the CDR loss into the reciprocal framework, the overall loss is summarized as:
\begin{equation}
    \mathcal{L}=  \mathcal{L}_{\text{main}} + \mathcal{L}_{\text{aux}} + \beta \mathcal{L}_{\text{CDR}},
    \label{eq:beta}
\end{equation}
where $\beta$ controls the regularization weight. After the training procedure, we abandon the auxiliary classifier and only keep the main branch for evaluation. As a result, the inference latency difference from SimGCD is negligible.

\begin{table*}[t]
\setcounter{table}{0}
\renewcommand{\thetable}{\arabic{table}}
\begin{center}
\vspace{-5pt}
\caption{Comparative results on the Semantic Shift Benchmark and Herbarium-19.}

\label{subtab:ssb}
\renewcommand\arraystretch{1.05}
\scalebox{0.78}{
\begin{tabular}{lcccccccccccccl}
\toprule
{\multirow{2}{*}{Method}}&{\multirow{2}{*}{Backbone}}&   \multicolumn{3}{c}{CUB200} & \multicolumn{3}{c}{Stanford Cars} & \multicolumn{3}{c}{FGVC-Aircraft} & \multicolumn{3}{c}{Herbarium-19} &\\
\cmidrule(rl){3-5}\cmidrule(rl){6-8}\cmidrule(rl){9-11} \cmidrule(rl){12-14}&& \cellcolor{cyan!10}All  & Base  & Novel  & \cellcolor{cyan!10}All  & Base  & Novel  & \cellcolor{cyan!10}All  & Base  & Novel & \cellcolor{cyan!10}All  & Base  & Novel  &Avg.\\

\midrule
$k$-means~\cite{macqueen1967some}   &DINO& \cellcolor{cyan!10}34.3 & 38.9 & 32.1 & \cellcolor{cyan!10}12.8 & 10.6 & 13.8 & \cellcolor{cyan!10} 16.0 & 14.4 & 16.8 & \cellcolor{cyan!10}13.0 & 12.2 & 13.4  &19.0 
\\
RS+~\cite{han2021autonovel}  &DINO& \cellcolor{cyan!10}33.3 & 51.6 & 24.2 &\cellcolor{cyan!10}28.3 & 61.8 & 12.1 & \cellcolor{cyan!10}26.9 & 36.4 & 22.2 & \cellcolor{cyan!10}27.9 & {55.8} & 12.8 &29.1 
\\
UNO+~\cite{fini2021unified} &DINO&\cellcolor{cyan!10}35.1 & 49.0 & 28.1 & \cellcolor{cyan!10}35.5 & 70.5 & 18.6 & \cellcolor{cyan!10}40.3 & 56.4 & 32.2  & \cellcolor{cyan!10}28.3 & {53.7} & 14.7  &34.8 
\\
ORCA~\cite{cao2022openworldsemisupervisedlearning}   &DINO& \cellcolor{cyan!10}35.3 & 45.6 & 30.2 & \cellcolor{cyan!10}23.5 & 50.1 & 10.7 & \cellcolor{cyan!10}22.0 & 31.8 & 17.1  & \cellcolor{cyan!10}20.9 & 30.9 & 15.5  &25.4 
\\
{$\ast$CRNCD}~\cite{gu2023class}  &DINO&  \cellcolor{cyan!10}62.7 &71.6 &58.2 & \cellcolor{cyan!10}54.1 & 75.7& 43.7 & \cellcolor{cyan!10}54.4 &59.5&51.8&\cellcolor{cyan!10}41.3&60.7&30.9 &53.1 
\\
GCD~\cite{vaze2022generalized}  &DINO& \cellcolor{cyan!10}{51.3} & 56.6 & 48.7 & \cellcolor{cyan!10}39.0 & 57.6 & 29.9 & \cellcolor{cyan!10}45.0 & 41.1 & 46.9 & \cellcolor{cyan!10}35.4 & 51.0 & 27.0 &42.7 
\\
DCCL~\cite{pu2023dynamic}  &DINO&\cellcolor{cyan!10}63.5& 60.8& 64.9& \cellcolor{cyan!10}43.1& 55.7& 36.2& \cellcolor{cyan!10}-& -& -& \cellcolor{cyan!10}-& -&- &-
\\
GPC~\cite{zhao2023learning}        &DINO& \cellcolor{cyan!10}{55.4}& {58.2} & {53.1} & \cellcolor{cyan!10}42.8 & 59.2 & 32.8 & \cellcolor{cyan!10}{46.3} & 42.5 & {47.9}  & \cellcolor{cyan!10}36.5 & 51.7 &27.9 &45.3 
\\

SimGCD~\cite{wen2023parametric}                       &DINO& \cellcolor{cyan!10}{60.3} & {65.6} & {57.7} & \cellcolor{cyan!10}53.8 & {71.9} & {45.0} & \cellcolor{cyan!10}{54.2} & {59.1} & {51.8} & \cellcolor{cyan!10}{44.0} & {58.0} & {36.4} &53.1 
\\
uGCD~\cite{vaze2024no} & DINO & \cellcolor{cyan!10}{65.7}& 68.0 & 64.6 & \cellcolor{cyan!10}56.5 & 68.1 & 50.9  &\cellcolor{cyan!10}53.8 & 55.4 & 53.0 &\cellcolor{cyan!10}- & - & - &-
\\
CMS~\cite{choi2024contrastive}  &DINO& \cellcolor{cyan!10}68.2 & {76.5}& 64.0& \cellcolor{cyan!10}56.9& 76.1& 47.6& \cellcolor{cyan!10}56.0& \textbf{63.4}& 52.3& \cellcolor{cyan!10}36.4& 54.9& 26.4 &54.4 
\\
InfoSeive ~\cite{rastegar2024learn}  &DINO& \cellcolor{cyan!10}69.4 & 77.9 & 65.2 & \cellcolor{cyan!10}55.7 & 74.8 & 46.4 & \cellcolor{cyan!10}56.3 & 63.7 &52.5 & \cellcolor{cyan!10}41.0 & 55.4 & 33.2  &55.6 
\\
SPTNet~\cite{wang2024sptnet}  &DINO&\cellcolor{cyan!10}65.8& 68.8& 65.1& \cellcolor{cyan!10}59.0& 79.2& 49.3& \cellcolor{cyan!10}59.3& {61.8} &58.1&\cellcolor{cyan!10}43.4& 58.7& 35.2 &56.9 
\\
LegoGCD~\cite{cao2024solving}  &DINO&\cellcolor{cyan!10}63.8 &71.9& 59.8& \cellcolor{cyan!10}57.3& 75.7 &48.4 &\cellcolor{cyan!10}55.0 &61.5 &51.7 &\cellcolor{cyan!10}45.1& 57.4 &38.4 &55.3 
\\
\textbf{RLCD (Ours)}     & DINO &\cellcolor{cyan!10}\textbf{70.0} &\textbf{{79.1}}& \textbf{65.4}& \cellcolor{cyan!10}\textbf{64.9}& \textbf{79.3} &\textbf{58.0 }&\cellcolor{cyan!10}\textbf{60.6} &{62.2} &\textbf{59.8} &\cellcolor{cyan!10}\textbf{46.4}& \textbf{61.2} &\textbf{38.4 } &\textbf{60.5}
\\
\midrule
SimGCD~\cite{wen2023parametric}& DINOv2 &\cellcolor{cyan!10}74.9&  78.5&73.1&\cellcolor{cyan!10}71.3& 81.6& 66.4&\cellcolor{cyan!10}63.9 & 69.9  &  60.9 &\cellcolor{cyan!10}58.7& 63.8 & 56.2 &67.2 
\\
uGCD~\cite{vaze2024no} &DINOv2 & \cellcolor{cyan!10}74.0 & 75.9 & 73.1 & \cellcolor{cyan!10}76.1 & 91.0 & 68.9 &\cellcolor{cyan!10}66.3 & 68.7 & 65.1 & \cellcolor{cyan!10}- & - &-  &-
\\
CiPR~\cite{hao2024cipr}& DINOv2&\cellcolor{cyan!10}78.3& 73.4& \textbf{80.8}&\cellcolor{cyan!10}66.7& 77.0& 61.8 &\cellcolor{cyan!10}-& - & - &\cellcolor{cyan!10}59.2& 65.0& \textbf{56.3} &-
\\
\textbf{RLCD (Ours)}& DINOv2 &\cellcolor{cyan!10}\textbf{78.7} &\textbf{79.5}& 78.3&  \cellcolor{cyan!10}\textbf{79.5}& \textbf{91.8}&\textbf{73.5}& \cellcolor{cyan!10}\textbf{72.6}& \textbf{77.3}&\textbf{70.3} &\cellcolor{cyan!10} \textbf{60.2}& \textbf{71.9}&{54.0} &\textbf{72.8} 
\\
\bottomrule
\end{tabular}
}
\end{center}
\vspace{-10pt}
\end{table*}

\section{Experiments}
\subsection{Experimental Setup}
\textbf{Datasets.} Following previous works, we evaluate our method on seven different GCD datasets.  Those consist of generic image recognition datasets CIFAR10/100~\cite{krizhevsky2009learning} and ImageNet-100~\cite{tian2020contrastive}; Semantic Shit Benchmark (SSB)~\cite{vaze2021open} datasets: CUB200~\cite{wah2011caltech}, Stanford Cars~\cite{krause20133d}, and FGVC-Aircraft~\cite{maji2013fine}; large-scale fine-grained dataset: Herbarium-19~\cite{tan2019herbarium}. Formally, each dataset is partitioned into base and novel subsets. The novel subset data is entirely unlabeled, while half of the base data is labeled during training, with the remaining half left unlabeled. For a fair comparison, we adopt the same random seed in the data split with \cite{vaze2022generalized}. 

\textbf{Evaluation metric.} We adopt cluster accuracy ($ACC$) to evaluate the performance of our method. More specifically, given the samples' prediction $\hat{y}$ and ground-truth labels $y$, the Hungarian optimal assignment algorithm ~\cite{kuhn1955hungarian} allocates the clustering result and calculates the accuracy. $ACC=\frac{1}{\left|\mathcal{D}^u\right|} \sum_{i=1}^{\left|\mathcal{D}^u\right|} \mathds{1}\left(y_i=\mathcal{G}\left(\hat{y}_i\right)\right)$, where $\mathcal{G}$ denotes the  optimal permutation function. 

\textbf{Implementation details.} In alignment with prevailing GCD methods, we conduct our experiments using the pre-trained DINO~\cite{caron2021emerging} ViT-B/16 and DINOv2~\cite{oquab2023dinov2} ViT-B/14 backbones. Unless otherwise specified, we adopt ViT-B/16 for experimental analysis. Training parameters involve the last block and the auxiliary token across all datasets. The final output retains the features from the \texttt{CLS} and \texttt{AUX} tokens for classification. The default learning rate is set to 0.1, following a cosine annealing decay schedule. Our model is trained for 200 epochs with a batch size of 128. Following ~\cite{wen2023parametric}, the temperature scalars are $\tau_c = 0.1, \tau_s = 0.07$, while $\tau_t$ scales from 0.07 to 0.04 within 30 epochs, and the balance weight $\lambda$ = 0.35. The default hyper-parameters in our method are specified as $ \alpha = 0.5, \beta = 0.5$. The augmentation includes Resize, RandomCrop, Random Horizontal Flip, Color Jittering, and Image Normalization. All experiments are conducted on a single NVIDIA GeForce 3090 GPU based on PyTorch.

\subsection{Main Results}
 We compare our approach with SOTA methods including clustering-based methods: $k$-means~\cite{macqueen1967some}, GCD~\cite{vaze2022generalized}, GPC~\cite{zhao2023learning},  DCCL~\cite{pu2023dynamic}, uGCD~\cite{vaze2024no} InfoSieve~\cite{rastegar2024learn}, CMS~\cite{choi2024contrastive}; parametric-based methods: SimGCD~\cite{wen2023parametric}, SPTNet~\cite{wang2024sptnet}, LegoGCD~\cite{cao2024solving} and strong baseline derived from NCD: RS+~\cite{han2021autonovel}, UNO+~\cite{fini2021unified}, ORCA~\cite{cao2022openworldsemisupervisedlearning}, CRNCD~\cite{gu2023class}. The best results are highlighted in bold and $\ast$ denotes reproduced results.

\textbf{Evaluation on fine-grained datasets.} Table~\ref{subtab:ssb} shows the comparative results on four fine-grained datasets which are more challenging than the generic. Clustering methods demonstrate inadequate performance, falling far behind the parametric methods on average. The proposed RLCD consistently outperforms the others across all four datasets and two backbones, with a notable 3.6\% average improvement on DINO. Its strong base performance is attributed to reliable base logits from the auxiliary branch, while its superior novel class accuracy results benefit from the effective regularization loss. 

\textbf{Evaluation on generic datasets.} As shown in Table~\ref{subtab:generic}, we present the comparison on generic datasets including CIFAR10/100 and ImageNet-100.  While RLCD performs on par with existing methods on CIFAR10, it achieves the best results on CIFAR100 and ImageNet-100. As a result, RLCD attains the highest average accuracy, improving by 0.8\% and 1.6\% on DINO and DINOv2, respectively. Notably, our method surpasses the two-stage SPTNet while utilizing fewer parameters during training. Note that the backbone is pre-trained on the extensive generic dataset like ImageNet-1k~\cite{deng2009imagenet}. Leveraging the strong feature representation of pretrained backbones, existing methods perform comparably on datasets such as CIFAR10 and ImageNet-100. Additionally, the abundance of labeled data in these three datasets minimizes discrimination degradation in parametric methods.

\begin{table*}[t]
\begin{center}
\caption{Comparative results on generic image recognition datasets.} \label{subtab:generic}
\scalebox{0.9}{
\begin{tabular}{lccccccccccc}
\toprule
{\multirow{2}{*}{Methods}}&     {\multirow{2}{*}{Backbone}}&\multicolumn{3}{c}{CIFAR10} & \multicolumn{3}{c}{CIFAR100} & \multicolumn{3}{c}{ImageNet-100} & \\
\cmidrule(rl){3-5}\cmidrule(rl){6-8}\cmidrule(rl){9-11}
                                 &  &\cellcolor{cyan!10}All  & Base  & Novel  &\cellcolor{cyan!10} All  & Base  & Novel  & \cellcolor{cyan!10}All  & Base  & Novel  & Avg.\\
\midrule
$k$-means~\cite{macqueen1967some}  &  DINO&\cellcolor{cyan!10}83.6 & 85.7 & 82.5 & \cellcolor{cyan!10}52.0& 52.2 & 50.8 & \cellcolor{cyan!10}72.7 & 75.5 & {71.3}  &69.4 
\\
RS+~\cite{han2021autonovel}          &  DINO&\cellcolor{cyan!10}46.8 & 19.2 & 60.5 &\cellcolor{cyan!10}58.2 & {77.6} & 19.3 & \cellcolor{cyan!10}37.1 & 61.6 & 24.8  &47.4 
\\
UNO+~\cite{fini2021unified}               & DINO&\cellcolor{cyan!10}68.6 & \textbf{98.3} & 53.8 &\cellcolor{cyan!10}69.5 & 80.6 & 47.2 & \cellcolor{cyan!10}70.3 & {95.0} & 57.9  &69.5 
\\
ORCA~\cite{cao2022openworldsemisupervisedlearning}                     &  DINO&\cellcolor{cyan!10}81.8 & 86.2 & 79.6 &\cellcolor{cyan!10}69.0 & 77.4 & 52.0 &\cellcolor{cyan!10}73.5 & {92.6} & 63.9  &74.8 
\\
$\ast$CRNCD~\cite{gu2023class}&  DINO&\cellcolor{cyan!10}96.9& 97.5 & 96.6 &\cellcolor{cyan!10}80.3& 84.7& 71.5 & \cellcolor{cyan!10}81.4& 94.4 & 74.8   &86.2 
\\
GCD~\cite{vaze2022generalized}       &  DINO&\cellcolor{cyan!10}{91.5} & {97.9} & {88.2} & \cellcolor{cyan!10}{73.0} & 76.2 & {66.5} &\cellcolor{cyan!10}{74.1} & 89.8 & 66.3  & 79.5 
\\
DCCL~\cite{pu2023dynamic}                                 &  DINO&\cellcolor{cyan!10}96.3 &96.5 &96.9 &\cellcolor{cyan!10}75.3& 76.8& 70.2& \cellcolor{cyan!10}80.5& 90.5& 76.2 &84.0 
\\
GPC~\cite{zhao2023learning}       &  DINO&\cellcolor{cyan!10}{92.2} & {98.2} & {89.1} & \cellcolor{cyan!10}{77.9} & {85.0} & {63.0} & \cellcolor{cyan!10}{76.9} & 94.3 & {71.0}  &82.3 \\
SimGCD~\cite{wen2023parametric}  & DINO&\cellcolor{cyan!10}{97.1} & {95.1} & {98.1} & \cellcolor{cyan!10}{80.1} & {81.2} & {77.8} & \cellcolor{cyan!10}{83.0} & {93.1} & {77.9}  & 86.7 
\\
InfoSieve~\cite{rastegar2024learn} & DINO&\cellcolor{cyan!10}94.8 & 97.7 & 93.4 &\cellcolor{cyan!10}78.3 &82.2 &70.5 &\cellcolor{cyan!10}80.5 &93.8 &73.8 &84.5 
\\
CiPR~\cite{hao2024cipr} & DINO  & \cellcolor{cyan!10}\textbf{97.7} & 97.5 & 97.7 &\cellcolor{cyan!10}81.5 & 82.4 & 79.7 & \cellcolor{cyan!10}80.5 & 84.9 & 78.3& 86.6 
\\
CMS~\cite{choi2024contrastive} & DINO&\cellcolor{cyan!10}- & -& -& \cellcolor{cyan!10}82.3& \textbf{85.7} & 75.5 & \cellcolor{cyan!10}84.7 &\textbf{95.6}& 79.2 &-
\\ 
SPTNet~\cite{wang2024sptnet}  &  DINO&\cellcolor{cyan!10}{97.3} & {95.0} & \textbf{98.6} & \cellcolor{cyan!10}{81.3} & {84.3} & {75.6} & \cellcolor{cyan!10}{85.4} & {93.2} & {81.4}  &88.0 
\\
LegoGCD~\cite{cao2024solving} &  DINO&\cellcolor{cyan!10}{97.1} & {94.3} & {98.5} & \cellcolor{cyan!10}{81.8} & {81.4} & \textbf{82.5} & \cellcolor{cyan!10}{86.3} & {94.5} & {82.1} &88.4 
\\
\textbf{RLCD (Ours)}&  DINO&\cellcolor{cyan!10}{97.4}& {96.4}& 97.9& \cellcolor{cyan!10}{\textbf{83.4}} & {84.2} & {81.9} & \cellcolor{cyan!10}\textbf{86.9} & 94.2 &\textbf{83.2} &\textbf{89.2} 
\\
\midrule
SimGCD~\cite{wen2023parametric}& DINOv2 &\cellcolor{cyan!10}98.8&  96.9&  \textbf{99.7}&\cellcolor{cyan!10}88.5& 89.3& 86.9& \cellcolor{cyan!10}88.5& 96.2&84.6&91.9 
\\
CiPR~\cite{hao2024cipr}& DINOv2 &\cellcolor{cyan!10}99.0& 98.7& 99.2&\cellcolor{cyan!10}90.3& 89.0& \textbf{93.1}& \cellcolor{cyan!10}88.2& 87.6&88.5&92.5 
\\
\textbf{RLCD (Ours)}& DINOv2 &\cellcolor{cyan!10}\textbf{99.0}& \textbf{98.9}& 99.1&  \cellcolor{cyan!10}\textbf{91.2}& \textbf{91.2}&91.2
& \cellcolor{cyan!10}\textbf{92.1}& \textbf{96.2}&\textbf{90.0} &\textbf{94.1}\\
\bottomrule
\end{tabular}
}
\end{center}
\vspace{-10pt}
\end{table*}

\textbf{Comparison of OB performance.}
As defined in \Cref{motivation}, OB serves as a metric to evaluate the base class discrimination capability of GCD models. As depicted in Fig.~\ref{fig:ob}, the proposed RLCD method achieves the best OB performance across all seven GCD datasets, with particularly significant improvements on fine-grained datasets. The comparison further highlights the great discrimination of our method.

\begin{figure}
    \centering
    \includegraphics[width=0.95\linewidth]{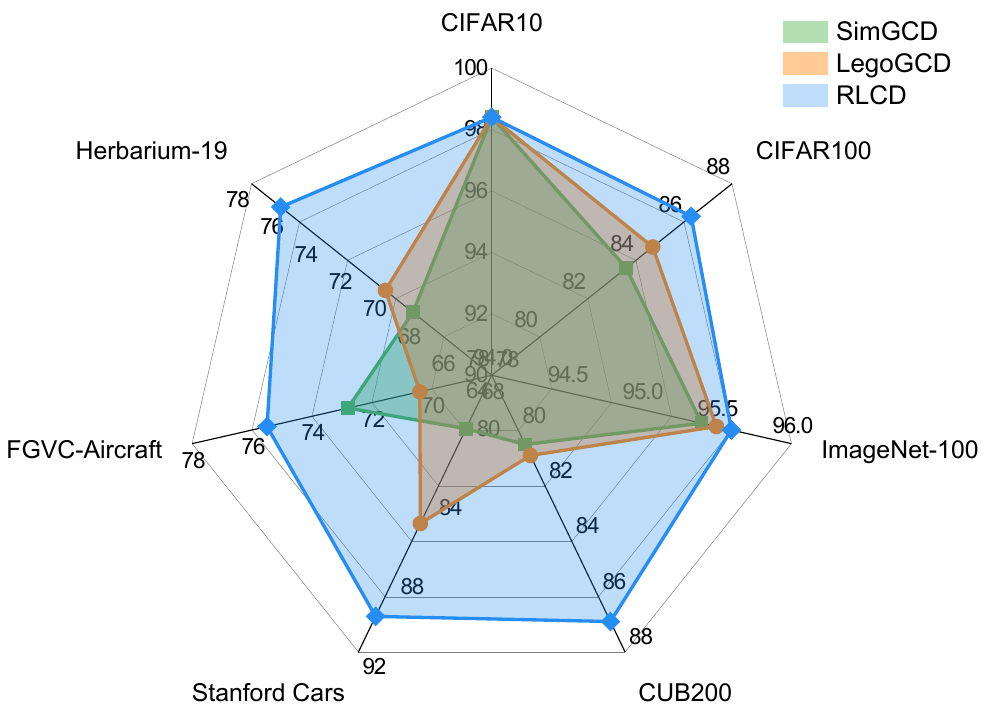}
    \caption{Comparison of oracle base accuracy among SimGCD, LegoGCD, and our RLCD.}
    \label{fig:ob}
    \vspace{-10pt}
\end{figure}

\subsection{Ablation Study}
\textbf{Effect of different loss components.}
As previously outlined, our approach mainly has four loss components: main branch loss (Main), auxiliary branch loss (AUX), cross-branch distillation (Distill), and class-wise distribution regularization (CDR). Here we demonstrate their effectiveness on CIFAR100 and CUB200 datasets. Additionally, we introduce the oracle base class accuracy (OB) as an additional metric for evaluation. 

Table~\ref{tab:albate} shows the GCD performance with different loss configurations. Specifically, using only the Main loss in (a) serves as the baseline, while (f) represents our complete method. The comparison of (a) with (b) indicates that CDR enhances overall performance with minimal influence on OB, contributing a substantial  5.3\%  novel class improvement on CUB200. The addition of the AUX loss in  (c) leads to comprehensive performance gains and improved discrimination. Since the final block is shared between the two branches, the model learns more robust parameters to fit different tasks, resulting in enhanced feature representation.   Comparing (c) with (d) validates that distillation further strengthens performance, improving base accuracy by  2.2\%.  Building on (d), applying CDR to the main branch in (e) improves novel class performance, while applying it to both branches in (f) slightly enhances base class discrimination, yielding the best overall results.  It turns out that CDR helps improve base class
performance without exacerbating the base class learning bias. When either the Distill (g) or the AUX (h) is exclusively removed, our model suffers a significant drop in base performance. This outcome highlights the necessity of both components in our approach. 
\begin{table}[t]
\begin{center}

\caption{Ablation experiments on different configurations of loss components: Main, AUX, Distill, and CDR. OB denotes the Oracle base class accuracy and $M$ represents only on main branch.}

\label{tab:albate}
\scalebox{0.8}{
\begin{tabular}{l|cccc|ccc|c}
\toprule
{\multirow{2}{*}{}} &{\multirow{2}{*}{Main}}& {\multirow{2}{*}{AUX}}&  {\multirow{2}{*}{Distill}}& {\multirow{2}{*}{CDR}}& \multicolumn{4}{c}{CUB200}  \\
\cmidrule(rl){6-9}
   ID                     & & & &      & \cellcolor{cyan!10}All  & Base  & Novel &OB \\
\midrule
(a)&\checkmark & & &  &\cellcolor{cyan!10}62.1 &70.8 &57.7 &83.9 \\
(b)&\checkmark & & & $M$&\cellcolor{cyan!10}65.7&71.1 & 63.0  & 83.7 \\ 
(c)&\checkmark & \checkmark &   & & \cellcolor{cyan!10}64.3& 73.7&59.7& 85.9  \\
(d)&\checkmark &\checkmark&\checkmark & & \cellcolor{cyan!10}66.6& 76.4& 61.7&86.4\\

(e)& \checkmark & \checkmark & \checkmark & $M$& \cellcolor{cyan!10}69.5& 78.1& 65.2&86.7\\
(f)&\checkmark &\checkmark&\checkmark & \checkmark & \cellcolor{cyan!10}\textbf{70.0}&  \textbf{79.1}&65.4&\textbf{86.9}\\
\midrule
(g)& \checkmark & \checkmark & & \checkmark & \cellcolor{cyan!10}67.6 & 71.7 & \textbf{65.6}& 85.8\\
(h)&\checkmark & &\checkmark & $M$& \cellcolor{cyan!10}65.9 & 73.8 &61.9 & 84.6 \\

 \bottomrule
\end{tabular}
}
\vspace{-10pt}
\end{center}
\end{table}

\textbf{Hyper-parameter sensitivity analysis.}
As indicated in ~\Cref{eq: alpha,eq:beta}, we utilize $\alpha$ and $\beta$ to control the distillation and regularization strength. Fig.~\ref{fig:weight}~(a)  illustrates the GCD performance curves with varying values of $\alpha$. As $\alpha$ increases, we observe a significant improvement in base class accuracy, aligning with the intuition that stronger distillation enhances base class dissemination. However, when $\alpha$ becomes excessively large, it leads to degradation in novel class performance, ultimately harming overall accuracy.  As shown in  Fig.~\ref{fig:weight}~(b), we see that increasing the weight of $\beta$ notably impacts novel class performance.  However, overly large $\beta$ shows a negative effect on base class accuracy. Our analysis indicates that the optimal values for $\alpha$ and $\beta$ are approximately 0.5, which yields the best overall performance.

\begin{figure}[t]
    \centering
    \includegraphics[width=1.0\linewidth]{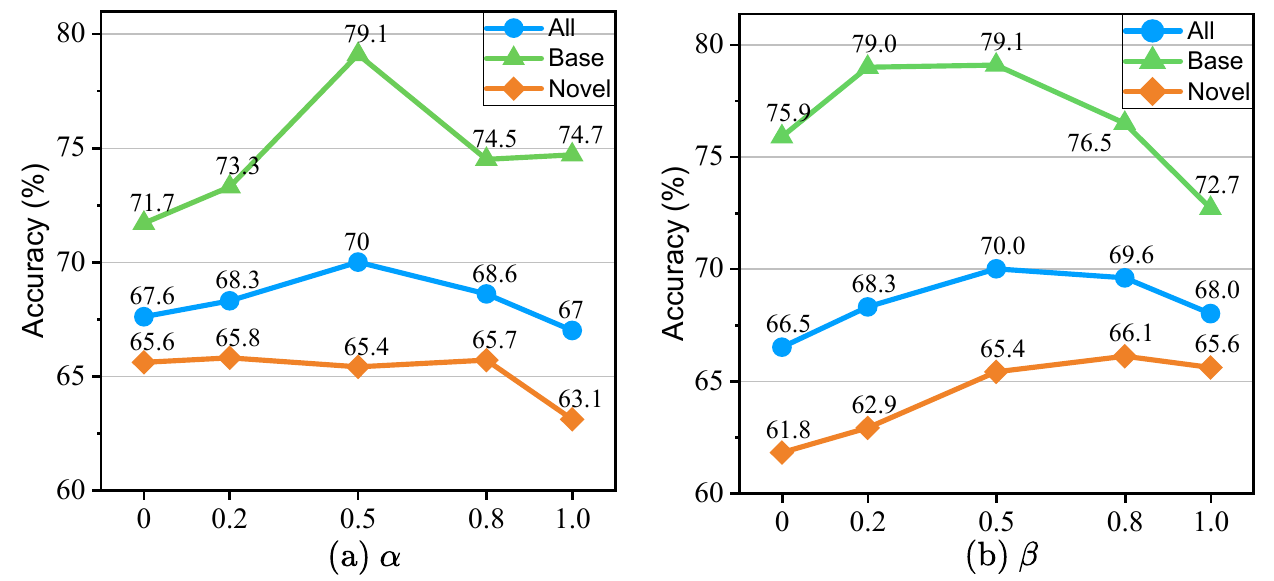}
    \vspace{-10pt}
    \caption{Effect of different weights of $\alpha$ and $\beta$ on CUB200.}
    \label{fig:weight}
    \vspace{-10pt}
\end{figure}

\textbf{Effect of different regularization.} We here present the GCD performance across different probability regularized methods. The baseline is our reciprocal learning framework (None) along with the comparative methods, including entropy minimization(ENT)~\cite{grandvalet2004semi}, minimum class confusion (MCC) ~\cite{jin2020minimum}, and label-encoding risk minimization (LERM)~\cite{zhang2024rethinking}. Besides, we modify the CDR loss into pair-wise distribution regularization (PDR) that directly maximizes the probability similarity between two views of a sample.  
\begin{figure}[hb]
    \centering
    \includegraphics[width=0.98\linewidth]{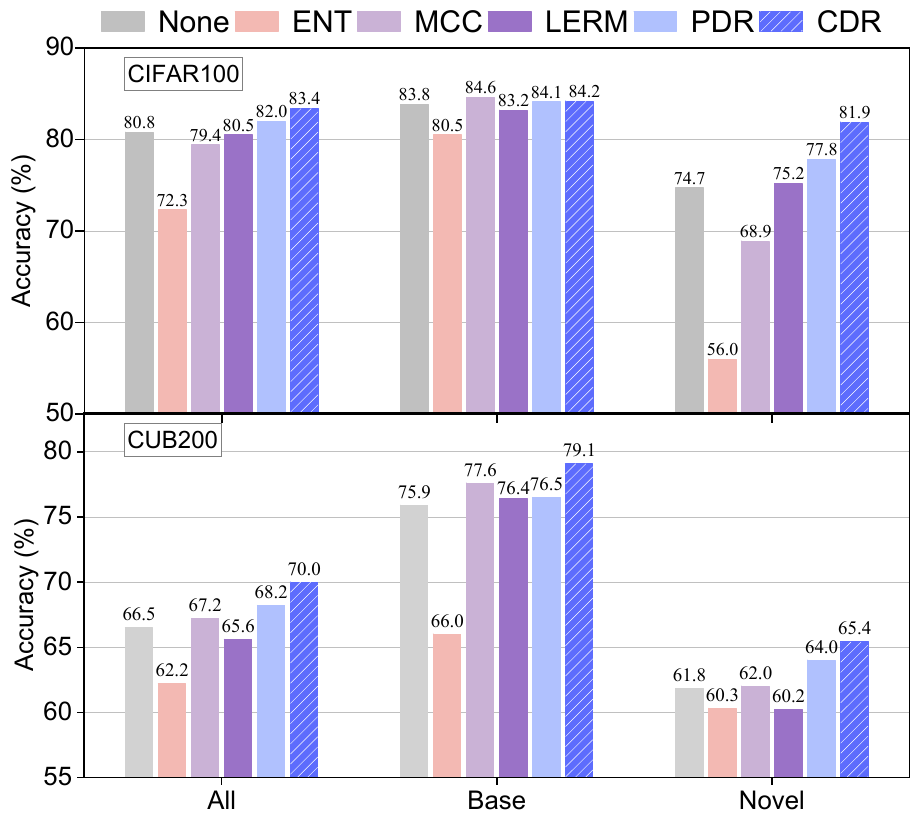}
    \vspace{-10pt}
    \caption{Comparison of various regularization methods on CIFAR100 and CUB200, where CDR achieves the best results.}
    \label{fig:reg}
    \vspace{-10pt}
\end{figure}

Fig.~\ref{fig:reg} indicates that prevailing regularization methods are not competitive in the GCD task. ENT shows a serious negative effect, leading to considerable degradation in both base and novel classes. MCC effectively improves the base performance yet harms the novel performance in both datasets. LERM shows marginal effects as its performance remains close to the baseline. While PDR provides benefits on both datasets, the improvements are limited and exacerbate the performance gap between base and novel classes.  In contrast, our proposed CDR effectively boosts novel class accuracy while maintaining base performance. Overall, the comparison demonstrates our proposed CDR is more appropriate for GCD tasks.

\subsection{Further Analysis}

\textbf{AUX branch is a good teacher.}
We conduct a comprehensive analysis of base-class pseudo-label accuracy across all datasets, comparing our auxiliary branch (AUX), main branch (CLS), and the SimGCD baseline.
The dataset names are the abbreviations according to all seven GCD datasets. 
\begin{table}[h]
    \centering
    \caption{Comparison of base-class pseudo-label accuracy.}
    \renewcommand\arraystretch{1.2}
    \scalebox{0.75}{
    \begin{tabular}{lccccccc}
    \toprule
         &  C10&  C100&  IN100&  CUB&  Scars&  Aircraft& Her19\\
    \midrule
 SimGCD& 98.4& 83.6& 95.4& 80.5& 80.7& 72.8&68.6\\
 RLCD (\texttt{CLS})& 98.4& 86.3& 95.6& 86.9& 90.2& 75.5&76.3\\
         RLCD (\texttt{AUX})&  \textbf{98.6}&  \textbf{87.1}&  \textbf{96.2}&  \textbf{88.3}&  \textbf{91.4}&  \textbf{75.9}& \textbf{76.4}\\
    \bottomrule
    \end{tabular}
    }
    \vspace{-10pt}
    \label{tab:aux}
\end{table}

From \Cref{tab:aux}, the auxiliary branch shows the best pseudo-label accuracy across all datasets. The superior performance of \texttt{AUX} can be attributed to its specialized focus on base class discrimination, which leads to more reliable pseudo-labels. Consequently, this enhanced reliability directly contributes to the main branch's ability to maintain strong base class performance during training.

\textbf{Performance under estimated category number.} As the previous evaluation is built on the known category numbers $K$, we here report the results with estimated categories borrowed from off-the-shelf methods GCD~\cite{vaze2022generalized} and GPC~\cite{zhao2023learning}. As shown in Table~\ref{tab:k}, our method demonstrates slightly reduced performance on ImageNet-100, CUB200, and Stanford Cars when using GCD's estimation, yet still outperforms other methods. This performance drop can be attributed to the overestimation of  $K$, which leads to unlabeled samples being clustered into a larger number of clusters. Notably, the impact on CUB200 is minimal, with only a 1.6\% degradation.
By leveraging the advanced estimation algorithm from GPC, which exhibits more accurate estimation, the performance gap is significantly reduced across all datasets. The differences are only 0.6\%, 0.4\%, and 2.5\% compared to the ground-truth reference.
The result indicates that our approach is not reliant on exact category numbers.

\begin{table*}[ht]

\begin{center}
\caption{Comparison of estimated category numbers on ImageNet-100, CUB200, and Stanford Cars.}
\label{tab:k}
\renewcommand\arraystretch{1.0}
\scalebox{0.92}{
\begin{tabular}{l>{\centering\arraybackslash}p{0.18\linewidth}ccccccccc}
\toprule
{\multirow{2}{*}{Methods}}&  {\multirow{2}{*}{$K$}} &  \multicolumn{3}{c}{ImageNet-100} & \multicolumn{3}{c}{CUB200} & \multicolumn{3}{c}{Stanford Cars} \\
\cmidrule(rl){3-5}\cmidrule(rl){6-8}\cmidrule(rl){9-11} & & \cellcolor{cyan!10}All  & Base  & Novel  &\cellcolor{cyan!10}All  & Base  & Novel &\cellcolor{cyan!10}All  & Base  & Novel\\
\midrule
\textbf{RLCD (Ours)}& 100 / 200 / 196 &  \cellcolor{cyan!10}86.9& 94.2& 83.2&  \cellcolor{cyan!10}70.0& 79.1& 65.4&\cellcolor{cyan!10}64.9& 79.3& 58.0\\
\midrule
GCD~\cite{vaze2022generalized} & \multirow{4}{*}{109 / 231 / 230}& \cellcolor{cyan!10}73.8 &92.1& 64.6 &\cellcolor{cyan!10}49.2 &56.2 &46.3 &\cellcolor{cyan!10}36.3& 56.6& 25.9\\
SimGCD~\cite{wen2023parametric} & &  \cellcolor{cyan!10}81.1 & 90.9&76.1 & \cellcolor{cyan!10}61.0& 66.0& 58.6 & \cellcolor{cyan!10}49.1 & 65.1 & 41.3\\
SPTNet~\cite{wang2024sptnet}  &  & \cellcolor{cyan!10}83.4  & 91.8 & 74.6 & \cellcolor{cyan!10}65.2& 71.0& 62.3 &\cellcolor{cyan!10}- & - &-\\
\textbf{RLCD (Ours)}&  &  \cellcolor{cyan!10}\textbf{84.4}& \textbf{93.2}& \textbf{80.0}&  \cellcolor{cyan!10}\textbf{68.4} & \textbf{77.1}& \textbf{64.1} & \cellcolor{cyan!10}\textbf{58.6}& \textbf{76.4}& \textbf{50.8}\\
\midrule
GPC~\cite{zhao2023learning} & \multirow{2}{*}{103 / 212 / 201} &  \cellcolor{cyan!10}75.3 & 93.4 & 66.7 &  \cellcolor{cyan!10}52.0 &55.5 &47.5 & \cellcolor{cyan!10}38.2& 58.9&27.4\\
\textbf{RLCD (Ours)}&  & \cellcolor{cyan!10}\textbf{86.3} &\textbf{94.1} &\textbf{82.4} & \cellcolor{cyan!10}\textbf{69.6}&\textbf{78.3}&\textbf{65.2}& \cellcolor{cyan!10}\textbf{62.4}& \textbf{78.1}&\textbf{54.8}\\
\bottomrule
\end{tabular}
}
\end{center}
\vspace{-10pt}
\end{table*}

\textbf{Effect of varying ratios of labeled samples.}
In the default experiment, 50\% samples are labeled following the mainstream GCD setting. Here we explore the effect of various ratios of labeled samples on model performance.

\begin{table}[ht]
    \centering
    \caption{Performance comparison of RLCD and SimGCD under varying labeled sample proportions on CUB200.}
    \resizebox{0.95\linewidth}{!}{\begin{tabular}{lcccc}
    \toprule
         &  Label Ratio (\%)&  \cellcolor{cyan!10}All&  Base&  Novel\\
    \midrule
         SimGCD&  \multirow{2}{*}{50} &  \cellcolor{cyan!10}60.3&  65.6&  57.7\\
         \textbf{RLCD (Ours)}&    &  \cellcolor{cyan!10}\textbf{70.0}&  \textbf{79.1}&  \textbf{65.4}\\
    \midrule
         SimGCD&  \multirow{2}{*}{25} &  \cellcolor{cyan!10}51.0&  52.8&  49.7\\
         \textbf{RLCD (Ours)}&    &  \cellcolor{cyan!10}\textbf{63.8}& \textbf{ 67.5}&  \textbf{61.1}\\
    \midrule
         SimGCD&  \multirow{2}{*}{10} &  \cellcolor{cyan!10}34.6&  31.8&  37.3\\
         \textbf{RLCD (Ours)}&    &  \cellcolor{cyan!10}\textbf{46.7}&  \textbf{45.3}&  \textbf{48.0}\\
    
    \bottomrule
    \end{tabular}}
    \label{tab:lab_ratio}
\end{table}
\cref{tab:lab_ratio} reveals a clear performance degradation pattern as the labeled data ratio decreases. However, our RLCD maintains a significant performance advantage even with severely limited labeled data.  The performance gap between RLCD and SimGCD widens in the low-resource scenario, demonstrating our method's superior capability in leveraging limited supervision.

\textbf{Effect of varying novel class numbers.}
By default, 50\% categories are selected as the novel class, following the standard GCD experiment setting. Here, we investigate the impact of varying the proportion of novel classes on model effectiveness, using SimGCD as the baseline for comparison.

\begin{table}[hb]
    \centering
    \caption{Performance comparison of RLCD and SimGCD with different novel class proportions on CUB200.}
    \resizebox{0.95\linewidth}{!}{\begin{tabular}{lcccc}
    \toprule
         &  Novel Ratio(\%)&  \cellcolor{cyan!10}All&  Base&  Novel\\
    \midrule
         SimGCD&  \multirow{2}{*}{50} &  \cellcolor{cyan!10}60.3&  65.6&  57.7\\
        \textbf{ RLCD (Ours)}&    &  \cellcolor{cyan!10}\textbf{70.0}&  \textbf{79.1}&  \textbf{65.4}\\
    \midrule
         SimGCD&  \multirow{2}{*}{60} &  \cellcolor{cyan!10}56.2&  65.1&  53.3\\
         \textbf{RLCD (Ours)}&    &  \cellcolor{cyan!10}\textbf{62.5}&  \textbf{75.6}&  \textbf{58.1}\\
    \midrule
         SimGCD&  \multirow{2}{*}{75} &  \cellcolor{cyan!10}51.8&  68.3&  49.1\\
         \textbf{RLCD (Ours)}&    &  \cellcolor{cyan!10}\textbf{56.8}&  \textbf{78.1}&  \textbf{53.2}\\
    
    \bottomrule
    \end{tabular}}
    
    \label{tab:novel_ratio}
\end{table}

Seen from \Cref{tab:novel_ratio},  our method consistently outperforms SimGCD across all metrics as the proportion of novel classes increases. The narrowing gap in novel class results is expected, given the increased difficulty of clustering with more novel categories.  Meanwhile, the strong results on base classes are largely maintained due to the reduced number of base categories. These results highlight the robustness of our approach under varying class distributions. 
\section{Conclusion}
In this paper, we propose a novel approach for promoting generalized category discovery performance. To enhance base class discrimination in parametric clustering, we introduce the Reciprocal Learning Framework (RLF), which consists of two collaborative branches: an auxiliary branch that provides reliable soft labels, and a main branch that filters pseudo-base samples using an all-class classifier. Additionally, to mitigate the learning bias towards base classes, we further present Class-wise Distribution Regularization (CDR), which significantly boosts the prediction confidence of unlabeled data and strengthens the discovery of novel classes. The two components are complementary and together constitute our RLCD method. Extensive experiments confirm the effectiveness of RLCD, achieving great performance on both base and novel classes.

\section*{Acknowledgment}
This project 
is supported by the National Natural Science Foundation of China (No.\ 62406192), Opening Project of the State Key Laboratory of General Artificial Intelligence (No.\ SKLAGI2024OP12), Tencent WeChat Rhino-Bird Focused Research Program, and Doubao LLM Fund.
The authors also thank Beijing PARATERA Tech CO., Ltd. (paratera.com) for HPC support.

\section*{Impact Statement}
This paper presents work whose goal is to advance the field of Generalized Category Discovery. There are many potential societal consequences of our work, none of which we feel must be specifically highlighted here.


\bibliography{icml}
\bibliographystyle{icml2025}

\clearpage
\appendix
\onecolumn
\begin{center}
    \Large \textbf{Appendix}
\end{center}

\section{Theoretical Support}

\textbf{Proof of \Cref{theo:1}}
\begin{proof}
\renewcommand{\qedsymbol}{}
\label{proof1}
    $\mathbf{1}^T\boldsymbol{m}_k = \frac{\mathbf{1}^T}{\sum_{i=1}^{N} \boldsymbol{p}_{i}^{(k)}}\left(\sum_{i=1}^{N} \boldsymbol{p}_{i}^{(k)}\boldsymbol{p}_i\right) =\frac{\left(\sum_{i=1}^{N} \boldsymbol{p}_{i}^{(k)}(\mathbf{1}^T\boldsymbol{p}_i)\right)}{\sum_{i=1}^{N} \boldsymbol{p}_{i}^{(k)}}  =  \frac{\sum_{i=1}^{N} \boldsymbol{p}_{i}^{(k)}}{\sum_{i=1}^{N} \boldsymbol{p}_{i}^{(k)}} = 1$.
\end{proof}

\textbf{Proof of \Cref{theo:2}}
\begin{proof}
\renewcommand{\qedsymbol}{}
\label{proof2}
    Let $ \mathbf{a}$  and $ \mathbf{b} $ be two probability distribution vectors in $ \mathbb{R}^n$ to present $\boldsymbol{m}_k$ and $\boldsymbol{m}_k^{\prime}$ :
    $$
    \mathbf{a} = [a_1, a_2, \ldots, a_n], \quad \mathbf{b} = [b_1, b_2, \ldots, b_n]
    $$
    subject to the constraints:
$$
\sum_{i=1}^{n} a_i = 1, \quad \sum_{i=1}^{n} b_i = 1, \quad a_i \geq 0, \quad b_i \geq 0 \text{ for } i = 1, 2, \ldots, n.
$$
The inner product is given by:
$$
\mathbf{a} \cdot \mathbf{b} = \sum_{i=1}^{n} a_i b_i.
$$
By the Cauchy-Schwarz inequality, we have:
$$
\left( \sum_{i=1}^{n} a_i b_i \right)^2 \leq \left( \sum_{i=1}^{n} a_i^2 \right) \left( \sum_{i=1}^{n} b_i^2 \right).
$$
Since \( \sum_{i=1}^{n} a_i = 1 \) and \( \sum_{i=1}^{n} b_i = 1 \), we can observe:
\[
\sum_{i=1}^{n} a_i^2 \leq \sum_{i=1}^{n} a_i = 1,
\sum_{i=1}^{n} b_i^2 \leq \sum_{i=1}^{n} b_i = 1.
\]
Thus, we have:
\[
\left( \sum_{i=1}^{n} a_i b_i \right)^2 \leq 1 \cdot 1 = 1 \implies \sum_{i=1}^{n} a_i b_i \leq 1.
\]
For equality \( \sum_{i=1}^{n} a_i b_i = 1 \) to hold, the Cauchy-Schwarz inequality must achieve equality, which occurs if and only if \( a_i \) and \( b_i \) are linearly dependent:
\[
a_i = c b_i \text{ for some constant } c \text{ for all } i.
\]
Given the constraints \( \sum_{i=1}^{n} a_i = 1 \) and \( \sum_{i=1}^{n} b_i = 1 \), it follows that:
\[
1 = c \sum_{i=1}^{n} b_i = c  \cdot 1 \implies c = 1.
\]
Therefore, we have:
\[
a_i = b_i \text{ for all } i.
\]
Besides,
\[
\sum_{i=1}^{n} a_i^2 \leq \sum_{i=1}^{n} a_i = 1 \implies a_i \in \{0,1\}
\]
which means:
\[
a_j = 1 \text{ for some } j \text{ and } a_i = 0 \text{ for } i \neq j.
\]
Thus, we conclude that:

\[
\mathbf{a} = \mathbf{b} \text{ and both are one-hot distributions}.
\]
\end{proof}

\section{Dataset Split}
As shown in Fig.~\ref{fig:data}, we illustrate the dataset split of Generalized Category Discovery(GCD) and compare it with Semi-Supervised Learning (SSL) and Novel Class Discovery (NCD). SSL assumes the labeled and unlabeled data share the same classes, NCD suggests unlabeled data all form the novel classes, while GCD allows the unlabeled data to belong to all classes. The comparison indicates GCD task is more challenging and practical in real-world scenarios.
\begin{figure}[!h]
    \centering
    \includegraphics[width=0.85\linewidth]{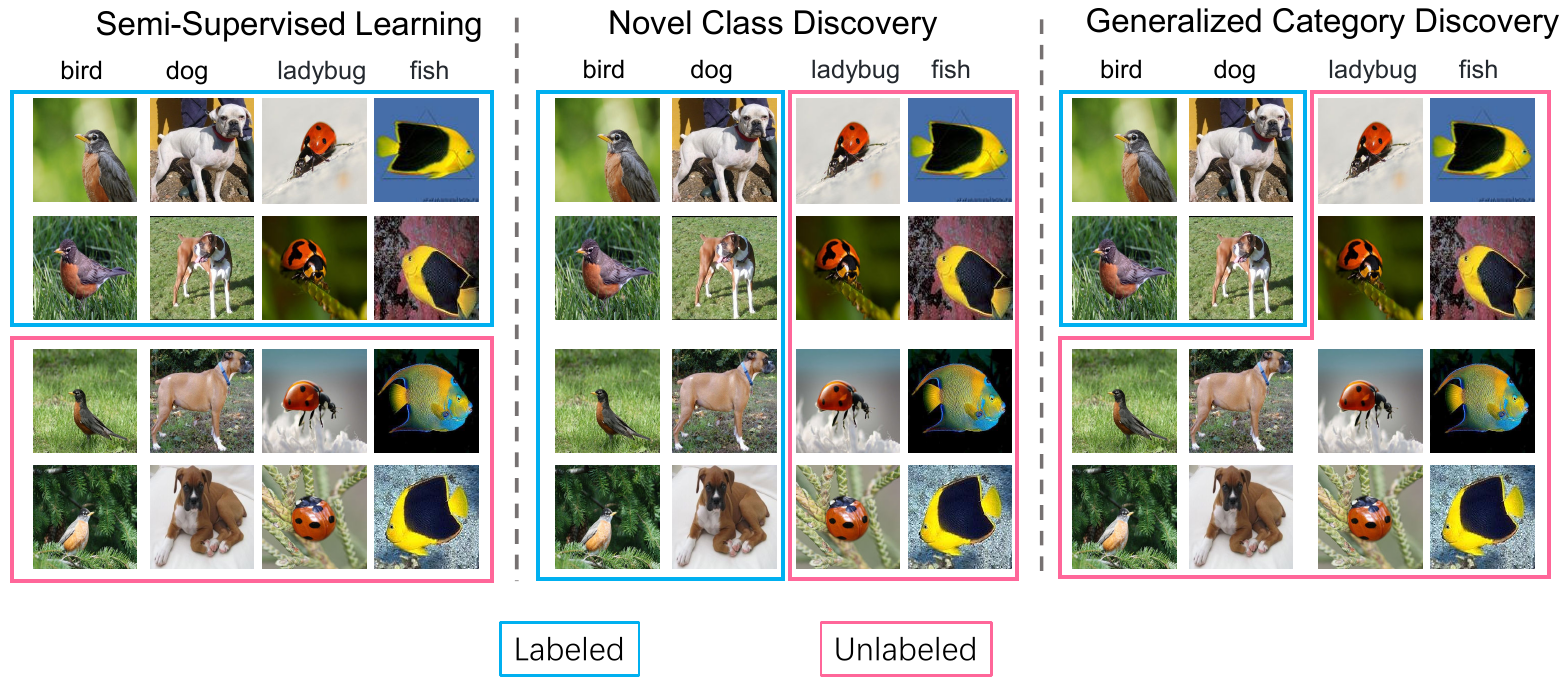}
    \caption{Difference in dataset split among SSL, NCD, and GCD.}
    \label{fig:data}
\end{figure}
\section{Loss Analysis}
Fig.~\ref{fig:suploss} shows SimGCD retains a high supervised cross-entropy (SupCE) loss during training, which indicates the noise labels in SimGCD. In contrast, our model achieves a near-zero SupCE loss. Since we introduce an auxiliary branch, it can provide more reliable soft labels to the main branch. This effectively eliminates noisy information and enhances discrimination capabilities.
\begin{figure}[!h]
    \centering
    \includegraphics[width=0.75\linewidth]{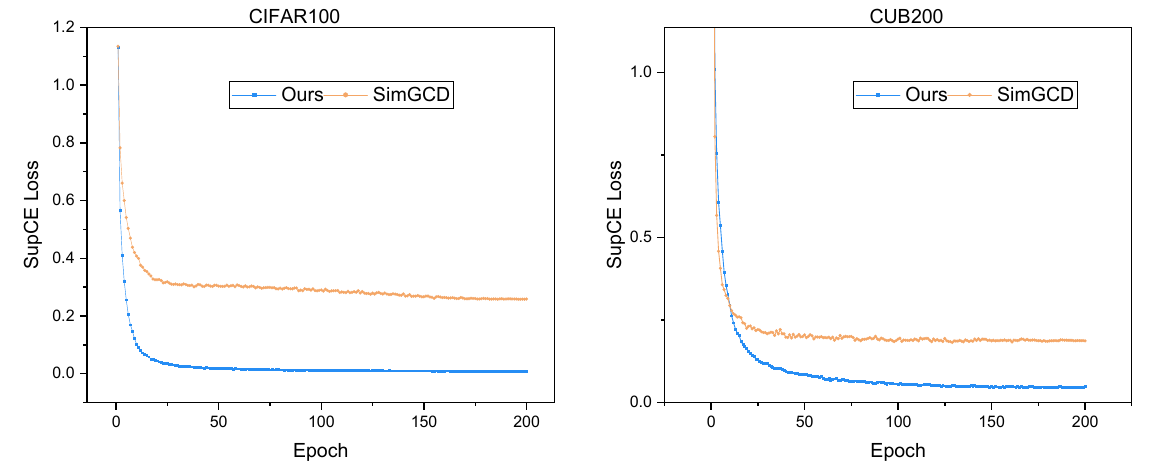}
    \caption{Supervised cross-entropy descent loss curves on CIFAR100 and CUB200. }
    \label{fig:suploss}
\end{figure}
\section{Quantitative Parameter Analysis}
We provide an overview of the parameter quantities in parametric models in Table~\ref{tab:parameter}. Despite incorporating an additional base classifier in the auxiliary branch, our method excludes the projector, resulting in significant parameter savings. The token's contribution to the overall model size is minimal, enabling us to utilize the fewest parameters during training. During the evaluation, we abandon the base classifier and retain the extra token with the backbone, which has a negligible parameter overhead.
\begin{table}[h]
    \centering
    \caption{Parameter quantity statistics among parametric models.}
    \vspace{3pt}
    \scalebox{0.8}{
    \begin{tabular}{l|cc|cc|cc|cc|c}
    \toprule
         \multirow{2}{*}{Methods} &  \multicolumn{2}{c|}{Backbone}&  \multicolumn{2}{c|}{Classifier} & \multicolumn{2}{c|}{Projector}& \multicolumn{2}{c|}{Extra} & \multirow{2}{*}{Total} \\
 & Name& \#Param.& Name& \#Param. & Name&\#Param.& Name&\#Param. & \\
 \midrule
         SimGCD&  ViT-B/16&  85,798,656&  All&   153,600& MLP&6,295,808& None& 0& 92,248,064\\
 LegoGCD& ViT-B/16& 85,798,656& All&  153,600& MLP&6,295,808& None& 0& 92,248,064\\
 SPTNet& ViT-B/16& 85,798,656& All&  153,600& MLP&6,295,808& Prompts& 105,120& 92,353,184\\
 \textbf{RLCD (Ours)}& ViT-B/16& 85,798,656& All+Base&  230,400& None&0& Token& 768& 86,029,824\\
    \bottomrule
    \end{tabular}
    
    }
    
    \label{tab:parameter}
\end{table}

\section{Comparison with Different Uncertainty Weights.}

As depicted in  \Cref{L_dis}, we adopt the maximum probability in the auxiliary branch $\max \left(\boldsymbol{p}_b^{\text{aux}}\right)$to denote the uncertainty weight of the pseudo-base samples. Here we make a comparison with different uncertainty weights. $\boldsymbol{c}_{\text{max}}$ is the prototype associated with the maximum probability. When the uncertainty weight is set to 0, distillation is excluded, resulting in reduced base accuracy. Conversely, the weight of 1 biases the model towards base classes, impairing novel class performance. Using the maximum cosine similarity for uncertainty yields similar results to using the maximum probability. Additionally, the uncertainty weight in the auxiliary branch obtains better performance, suggesting its greater reliability compared to the main branch. 

\begin{table}[h]
\begin{center}
\vspace{-10pt}
\renewcommand\arraystretch{1.12}

\caption{Comparison of different uncertainty weights.}
\label{tab:uncer}
\begin{tabular}{lcccccc}
\toprule
{\multirow{2}{*}{Uncertainty weight}}&    \multicolumn{3}{c}{CIFAR100} & \multicolumn{3}{c}{CUB200}  \\
\cmidrule(rl){2-4}\cmidrule(rl){5-7} & \cellcolor{cyan!10}All  & Base  & Novel  &\cellcolor{cyan!10}All  & Base  & Novel\\
\midrule
0 & \cellcolor{cyan!10}82.3&  83.1&  80.7&\cellcolor{cyan!10}67.6& 71.7& \textbf{65.6}\\
1 &\cellcolor{cyan!10}81.6& \textbf{85.0}& 75.0&\cellcolor{cyan!10}66.5& 78.7& 60.4\\

$ \cos \left( f\left(\boldsymbol{x}_i\right),\boldsymbol{c}_{\text{max}} \right) $ &\cellcolor{cyan!10}82.0&83.6&78.9& \cellcolor{cyan!10}68.6&76.9&64.5\\
$ \cos \left( f^{\text{aux}}\left(\boldsymbol{x}_i\right),\boldsymbol{c}^{\text{aux}}_{\text{max}} \right) $& \cellcolor{cyan!10}82.8& 83.4& 81.6&\cellcolor{cyan!10}69.4&78.7& 64.8\\
$\max \left(\boldsymbol{p}_b\right)$ & \cellcolor{cyan!10}82.6&83.8&79.9&\cellcolor{cyan!10}68.3& 76.6&   64.2\\
$\max \left(\boldsymbol{p}_b^{\text{aux}}\right)$ &\cellcolor{cyan!10}\textbf{83.4}& 84.2& \textbf{81.9}&  \cellcolor{cyan!10}\textbf{70.0}& \textbf{79.1}&65.4\\
\bottomrule

\end{tabular}
\end{center}
\vspace{-10pt}
\end{table}

\section {Discussion with CRNCD}
As CRNCD~\cite{gu2023class} and our approach both involve distillation, there are several key differences, listed below.

\begin{itemize}

    \item \textbf{Different tasks.} CRNCD aims to deal with novel class discovery (NCD), where all unlabeled data belong to novel classes. In contrast, our focus is on GCD, where unlabeled data comprises both novel and base classes. Due to the intrinsic difference between these two tasks, CRNCD demonstrates unsatisfactory performance in GCD.
    \item \textbf{Different motivations for using distillation.}  The distillation in CRNCD aims to improve novel class performance, whereas our distillation is intended to promote base class discrimination.
    \item \textbf{Different training paradigms.} CRNCD adopts a two-stage training procedure that first trains a supervised model and then freezes it in the second stage. Contrarily, our framework adopts one-stage training in which the main and auxiliary branches help each other simultaneously.
    \item \textbf{Different distilled data.} While CRNCD distills all unlabeled data, our approach focuses on pseudo-base data. Here, pseudo-base refers to predictions belonging to the base classes within the main branch.
    \item \textbf{Different distillation weights.} CRNCD adopts a learnable weight function to control the distillation strength. We utilize the maximum auxiliary probability as an uncertainty-based weight, which provides a simpler yet effective mechanism for regulating distillation.

\end{itemize}
Besides the above statement, we conduct a thorough experiment to compare the different distillation strategies. \Cref{tab:distill} shows that distilling across all unlabeled data reduces novel-class performance, as novel data increases the likelihood of incorrect base-class prediction.   Additionally, the learnable distillation weight in CRNCD performs poorly for GCD, causing a significant drop in performance. These results highlight the effectiveness of our proposed design. 

\begin{table}[h]
\begin{center}
\vspace{-10pt}
\renewcommand\arraystretch{1.12}
\caption{Comparison of different distillation strategies.}
\label{tab:distill}
\begin{tabular}{lcccccc}
\toprule
{\multirow{2}{*}{Distillation strategy}}&    \multicolumn{3}{c}{CIFAR100} & \multicolumn{3}{c}{CUB200}  \\
\cmidrule(rl){2-4}\cmidrule(rl){5-7} & \cellcolor{cyan!10}All  & Base  & Novel  &\cellcolor{cyan!10}All  & Base  & Novel\\
\midrule
Distill on all unlabeled data& \cellcolor{cyan!10}81.8&  83.8&  77.8&\cellcolor{cyan!10}66.4& 75.4& 61.8\\
Learnable weight&\cellcolor{cyan!10}80.9& 82.6& 77.5&\cellcolor{cyan!10}65.0& 74.5& 60.2\\
\textbf{RLCD (Ours)}&\cellcolor{cyan!10}\textbf{83.4}& \textbf{84.2}& \textbf{81.9}&  \cellcolor{cyan!10}\textbf{70.0}& \textbf{79.1} &\textbf{65.4} \\
\bottomrule

\end{tabular}
\end{center}
\vspace{-10pt}
\end{table}

\section{Extended Experiment on Estimated Category Numbers.}
Since the estimated category numbers from GCP~\cite{zhao2023learning} cover the partial datasets, we conduct extra comparisons on other datasets under CMS~\cite{choi2024contrastive} estimation. Table~\ref{tab:cms} shows that our method delivers better performance than CMS.

\begin{table}[h]
\vspace{-8pt}
\begin{center}
\caption{Comparison of estimated category numbers on CIFAR100, FGVC-Aircraft, Herbarium-19 and Stanford Cars datasets.}
\vspace{3pt}
\label{tab:cms}
\scalebox{0.75}{
\begin{tabular}{lcccccccccclll}
\toprule
{\multirow{2}{*}{Methods}}&  {\multirow{2}{*}{$K$}} &  \multicolumn{3}{c}{CIFAR100} & \multicolumn{3}{c}{FGVC-Aircraft} & \multicolumn{3}{c}{Herbarium-19} & \multicolumn{3}{c}{Standford Cars}\\
\cmidrule(rl){3-5}\cmidrule(rl){6-8}\cmidrule(rl){9-11}\cmidrule(rl){12-14} & & \cellcolor{cyan!10}All  & Base  & Novel  &\cellcolor{cyan!10}All  & Base  & Novel &\cellcolor{cyan!10}All  & Base  & Novel & \cellcolor{cyan!10}All  & Base  &Novel \\
\midrule
CMS~\cite{choi2024contrastive}& 97/98/666/152 &  \cellcolor{cyan!10}79.6& 83.2& \textbf{72.3}&  \cellcolor{cyan!10}55.2&60.6&52.4& \cellcolor{cyan!10}37.4& 56.5&27.1 & \cellcolor{cyan!10}51.7& 68.9&43.4\\

\textbf{RLCD (Ours)} & 97/98/666/152 & \cellcolor{cyan!10}\textbf{80.0}&\textbf{84.3}&71.4& \cellcolor{cyan!10}\textbf{69.4} &\textbf{78.5}&\textbf{64.8} & \cellcolor{cyan!10}\textbf{46.0}& \textbf{62.3}&\textbf{37.2}& \cellcolor{cyan!10}\textbf{56.6}&\textbf{ 73.7}&\textbf{48.4}\\
\bottomrule
\end{tabular}
}
\end{center}

\end{table}

\section{Qualitative Visualization}
As shown in Fig.~\ref{fig:tsne}, we utilize $t$-SNE~\cite{van2008visualizing} to visualize the feature distribution between DINO, SimGCD, LegoGCD, and our model on the FGVC-Aircraft dataset.  The $t$-SNE results for DINO demonstrate poor clustering performance, primarily due to the significant domain gap between the ImageNet and Aircraft datasets, which hinders effective feature learning. Meanwhile, SimGCD and LegoGCD achieve unsatisfactory feature clustering. It is observed that ``Class 3" features of SimGCD spread out in the feature space, while LegoGCD forms two clusters of ``Class 3".  In contrast, our model reveals distinct clusters corresponding to different categories. The visualization comparison validates the superior feature representation of our model.

\begin{figure*}[ht]
    \centering 
    \includegraphics[width=1\linewidth]{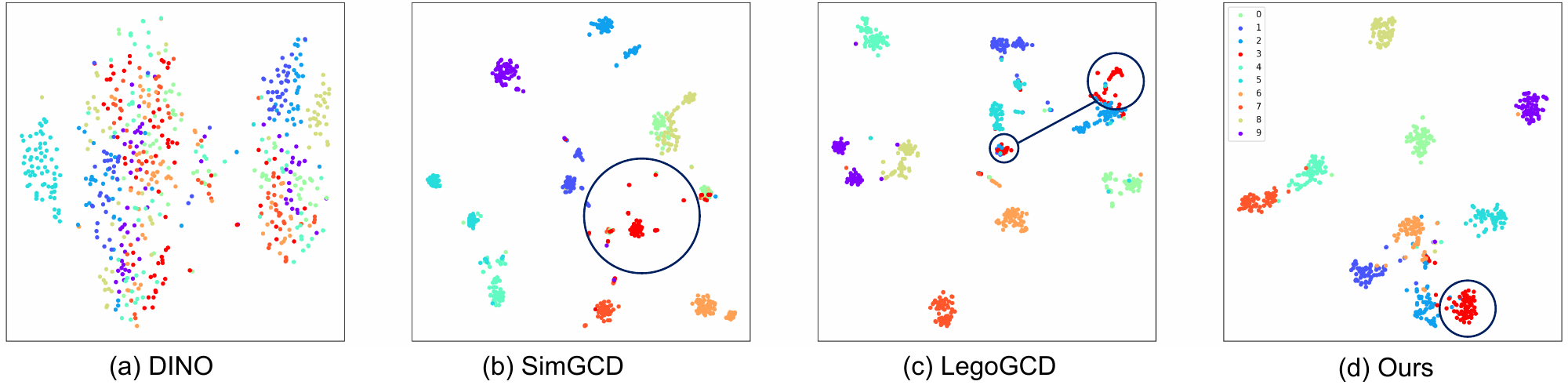}
    \vspace{-10pt}
    \caption{$t$-SNE visualization comparing DINO, SimGCD, LegoGCD, and our method on the FGVC-Aircraft dataset, with samples randomly selected from 10 classes. }
    \label{fig:tsne}
    
\end{figure*}

\newpage 
\section{Further Analysis on $H$}
By definition, $H$ encourages more diverse predictions in the mini-batch. In fact, $H$ plays an important role in balancing base and novel class performance in the parametric-based method. However,  $H$ would hurt base class discrimination, resulting in degraded oracle base class accuracy. Here, we conduct a deep analysis of the effect of $H$ utilizing class-wise prediction distribution (CPD). Root Mean Squared Error (RMSE) is to quantify the class prediction distribution deviation from the Ground Truth.
\begin{figure}[h]
    \centering
    \includegraphics[width=0.9\linewidth]{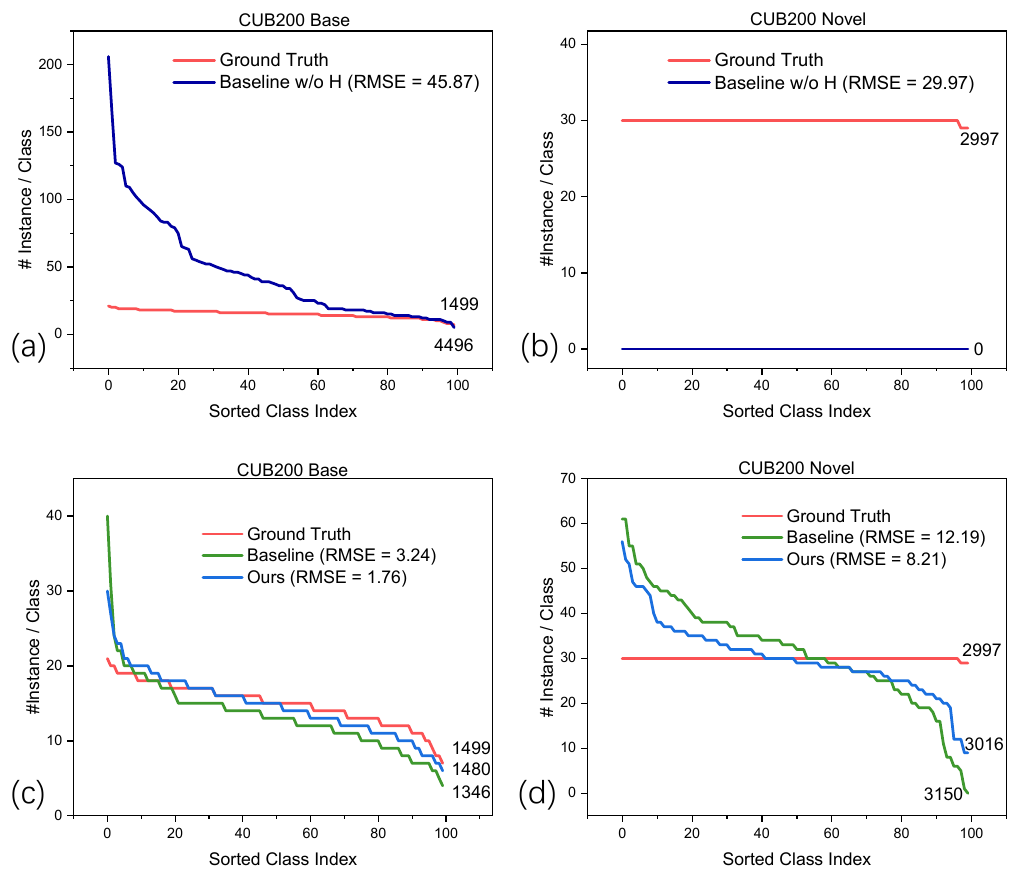}
    \caption{Class-wise prediction distributions on different methods. Concretely, Root Mean Squared Error (RMSE) is to measure the prediction distribution deviation from Ground Truth, and the cumulative number is marked at the end of each curve. }
    \label{fig:h}
\end{figure}

As shown in Fig.~\ref{fig:h}, we compare several methods, including a parametric-based baseline with and without $H$, as well as our proposed RLCD. When $H$ is removed, all samples are predicted as the base class, resulting in a significantly large RMSE for both base and novel CPDs, 45.87 and 29.97, respectively. From Fig.~\ref{fig:h}~(c) and (d), we observe that $H$ effectively refines the CPD, reducing the RMSE by 42.63 for base classes and 17.78 for novel classes.  However, $H$ also introduces the side effect of misclassifying some base class samples. Specifically, the predicted base class samples are much lower than the ground truth, dropping from 1499 to 1346, which introduces noisy label learning during training. To mitigate this noisy learning, we propose a reciprocal learning framework, where the auxiliary branch provides more reliable pseudo labels to the main branch. Through cross-branch distillation, our method increases the number of predicted base class samples from 1346 to 1480. Furthermore, our CPD is closer to the ground truth, outperforming the baseline with RMSE reductions of 1.48 and 3.98 for the base and novel classes, respectively.

\newpage

\section{Future Work}
\textbf{Domain generalized category discovery.} 
Extending GCD to handle domain shifts~\cite{wang2025hilo} between training and testing data remains an open challenge. Future work could explore domain adaptation techniques to improve model generalization across different domains while maintaining the ability to discover novel categories.

\textbf{Continual generalized category discovery.} 
The current GCD setting assumes a static set of novel classes. A promising direction is to develop continual learning approaches~\cite{zhao2023few,zhao2024safe} that can incrementally discover and learn new categories over time, while preserving knowledge of previously seen classes.

\textbf{Generalized category discovery with limited supervision.} 
While current GCD methods require a substantial amount of labeled base class data, real-world scenarios often have very limited labeled data. Future work could investigate transductive few-shot learning approaches to reduce the dependency on labeled data while maintaining effective novel class discovery.

\end{document}